\titlespacing{\section}{0pt}{6pt plus 2pt}{4pt}
\titlespacing{\subsection}{0pt}{4pt plus 1pt}{3pt}
\newtheorem{theorem}{Theorem}[section]
\newtheorem{proposition}[theorem]{Proposition}
\newtheorem{corollary}[theorem]{Corollary}
\theoremstyle{definition}
\newtheorem{definition}[theorem]{Definition}
\theoremstyle{remark}
\definecolor{goodcell}{HTML}{bfd9c1}
\definecolor{badcell}{HTML}{e3a6a6}
\definecolor{first}{HTML}{00A64F}  % green
\definecolor{second}{HTML}{006EB8} % blue
\newcommand{\one}[1]{\textcolor{first}{\bf#1}}
\newcommand{\two}[1]{\textcolor{second}{\bf#1}}
\newcommand{\three}[1]{{\bf#1}}
\def\eqref#1{equation~\ref{#1}}
\def\1{\bm{1}}
\def\vb{{\bm{b}}}
\def\ve{{\bm{e}}}
\def\vh{{\bm{h}}}
\def\vj{{\bm{j}}}
\def\vu{{\bm{u}}}
\def\vv{{\bm{v}}}
\def\vx{{\bm{x}}}
\def\valpha{{\bm{\alpha}}}
\def\mA{{\bm{A}}}
\def\mW{{\bm{W}}}
\def\mX{{\bm{X}}}
\def\mY{{\bm{Y}}}
\DeclareMathAlphabet{\mathsfit}{\encodingdefault}{\sfdefault}{m}{sl}
\SetMathAlphabet{\mathsfit}{bold}{\encodingdefault}{\sfdefault}{bx}{n}
\newcommand{\tens}[1]{\bm{\mathsfit{#1}}}
\def\tD{{\tens{D}}}
\def\tH{{\tens{H}}}
\def\gG{{\mathcal{G}}}
\def\gM{{\mathcal{M}}}
\def\gN{{\mathcal{N}}}
\def\gS{{\mathcal{S}}}
\def\gT{{\mathcal{T}}}
\def\sN{{\mathbb{N}}}
\def\sR{{\mathbb{R}}}
\newcommand{\mlp}{\mathsf{MLP}}
\newcommand{\agg}{\text{AGG}}
\newcommand{\ourgnn}{\Phi}
\newcommand{\osan}{\Psi}
\newcommand{\mpnn}{\gM}
\newcommand{\gnn}{\gT}
\newcommand{\pool}{\text{U}}
\newcommand{\ourmethod}{HOD-GNN\xspace} % Yoav: Added xspace to make it more readable
\definecolor{fabgreen}{rgb}{0.0, 0.5, 0.37}
\title{On The Expressive Power of GNN
Derivatives}
\author{Yam Eitan\textsuperscript{1}, Moshe Eliasof\textsuperscript{2}, Yoav Gelberg\textsuperscript{3}, Fabrizio Frasca\textsuperscript{1}, Guy Bar-Shalom\textsuperscript{1}, Haggai Maron\textsuperscript{1,4} \\
\textsuperscript{1} Technion – Israel Institute of Technology  \\
\textsuperscript{2} University of Cambridge \\
\textsuperscript{3} University of Oxford \\
\textsuperscript{4} NVIDIA Research}
\begin{document}

\maketitle

\begin{abstract}
Despite significant advances in Graph Neural Networks (GNNs), their limited expressivity remains a fundamental challenge. Research on GNN expressivity has produced many expressive architectures, leading to architecture hierarchies with models of increasing expressive power. Separately, derivatives of GNNs with respect to node features have been widely studied in the context of the oversquashing and over-smoothing phenomena, GNN explainability, and more. To date, these derivatives remain unexplored as a means to enhance GNN expressivity. In this paper, we show that these derivatives provide a natural way to enhance the expressivity of GNNs. We introduce High-Order Derivative GNN (\ourmethod), a novel method that enhances the expressivity of Message Passing Neural Networks (MPNNs) by leveraging high-order node derivatives of the base model. These derivatives generate expressive structure-aware node embeddings processed by a second GNN in an end-to-end trainable architecture. Theoretically, we show that the resulting architecture family's expressive power aligns with the WL hierarchy. We also draw deep connections between \ourmethod, Subgraph GNNs, and popular structural encoding schemes. For computational efficiency, we develop a message-passing algorithm for computing high-order derivatives of MPNNs that exploits graph sparsity and parallelism. Evaluations on multiple graph learning benchmarks demonstrate \ourmethod’s excellent performance on popular graph learning tasks.
\end{abstract}

\section{Introduction}
\label{sec:intro}

\looseness=-1
Graph Neural Networks (GNNs) have become foundational tools in geometric deep learning, with widespread applications in domains such as life sciences \citep{wong2024discovery}, social sciences \citep{monti2019fake}, optimization \citep{cappart2023combinatorial}, and more. Despite their empirical success, many GNNs suffer from a fundamental limitation: their expressive power is inherently bounded. In particular, the widely used family of Message Passing Neural Networks (MPNNs) is at most as expressive as the Weisfeiler–Lehman (1-WL) graph isomorphism test \cite{morris2019weisfeiler,xu2018powerful}, limiting their ability to distinguish between even simple non-isomorphic graphs and capture intricate structural patterns \cite{chen2020can}. To address this shortcoming, a growing body of work has proposed more expressive GNN architectures, typically organized into expressivity hierarchies that balance computational cost with representational power \cite{maron2019provably,morris2019weisfeiler,morris2023weisfeiler}. 

\looseness=-1
Concurrently to advances in GNN expressivity, the derivatives of the final node representations $\vh_v^{(T)}$ \footnote{$\vh_v^{(t)}$ is the representation of the node $v$ after the $t$-th GNN layer.} and the graph-level output $\vh^{\text{out}}$ with respect to the initial features $\mX_v$ have played a key role in several research directions. For over-squashing analysis \citep{di2023over, didoes}, both first-order derivatives $\frac{\partial \vh^{(T)}_v}{\partial \mX_u}$ and mixed partial derivatives $\frac{\partial^2 \vh^{\text{out}}}{\partial \mX_v \partial \mX_u}$ quantify inter-node influence and communication capacity. In over-smoothing studies like \citet{arroyo2025vanishing}, derivatives $\frac{\partial \vh^{\text{out}}}{\partial \mX_v}$ are used to analyze vanishing gradients, connecting over-smoothing to diminished gradient flow. GNN gradient-based explainability methods \citep{baldassarre2019explainability, pope2019explainability} also use these derivatives to identify influential nodes and features. The prevalence of these derivatives across diverse contexts in GNN research suggests they encode valuable information that may be informative for graph learning tasks.

\looseness=-1
\textbf{Our approach.} In this work, we reveal a surprising connection between these two lines of research. We show that incorporating derivatives of a base MPNN with respect to initial node features as additional inputs to a downstream MPNN enhances the expressivity of the base components. One intuitive way to understand this connection is through the mechanism by which GNNs with marking\citep{papp2022theoretical,pellizzoni2024expressivity} improve expressivity: they choose a node from the input graph and add to it a unique identifier before processing it through an MPNN. While these identifiers are often implemented through an explicit, often discrete, perturbations to the node features, our approach instead computes derivatives of the MPNN output, capturing the effect of \textit{infinitesimal} perturbations. This provides equivalent expressivity gains while also introducing an inductive bias toward derivative-aware representations. See Section~\ref{sec:method}  for a detailed discussion.

% One intuitive way to understand this connection is through the mechanism by which Subgraph GNNs \citep{bevilacqua2021equivariant, cotta2021reconstruction,papp2022theoretical} improve expressivity: they process multiple copies of the input graph, each augmented with node markings-feature vectors that distinguish specific nodes in the graph. While Subgraph GNNs implement their markings through an explicit, often discrete, perturbations to the node features, our approach instead computes derivatives of the MPNN output, capturing the effect of \textit{infinitesimal} perturbations. This provides equivalent expressivity gains while also introducing an inductive bias toward derivative-aware representations. 

We introduce High-Order Derivative GNN (\ourmethod), a novel expressive GNN family that leverages the derivatives of a base MPNN to improve its expressive power. We first introduce $1$-\ourmethod, which consists of three components: a base MPNN, a derivative encoder network, and a downstream GNN. $1$-\ourmethod\ computes high-order derivatives of the base MPNN with respect to the features of a \emph{single} node at a time, i.e., $\frac{\partial^\alpha \vh^{(T)}_v}{\partial \mX_u^\alpha}$ and $\frac{\partial^\alpha \vh^{\text{out}}}{\partial \mX_u^\alpha}$. These derivatives are then encoded into new derivative-aware node features via the encoder network, which are then passed to the downstream GNN. Theoretically,  We show that $1$-\ourmethod\ models are more expressive than standard GNNs, can compute popular structural encodings, and are tightly related to Subgraph GNNs \cite{cotta2021reconstruction, bevilacqua2021equivariant}. Empirically, we demonstrate several desirable properties of our model: it achieves strong performance across a range of standard graph benchmarks, scales to larger graphs that remain out of reach for other expressive GNNs, and can accurately count graph substructures, providing direct empirical evidence of its expressive power.

% it achieves strong performance across a range of standard graph benchmarks, delivers significant gains even when using base MPNNs with very small hidden dimensions (as low as 8), and remains effective when using very deep base MPNNs (up to 20 layers).

\looseness=-1
We then extend $1$-\ourmethod\ to $k$-\ourmethod, which supports mixed derivatives with respect to $k$ distinct node features (i.e. $\frac{\partial^{\alpha_1+\dots+\alpha_k} \vh^{(T)}_v}{\partial \mX_{u_1}^{\alpha_1},\dots,\mX_{u_k}^{\alpha_k} }$, $\frac{\partial^{\alpha_1+\dots+\alpha_k} \vh^{\text{out}}}{\partial \mX_{u_1}^{\alpha_1},\dots,\mX_{u_k}^{\alpha_k} }$). Like $1$-\ourmethod, the $k$-\ourmethod\ forward pass begins by computing higher-order mixed derivatives of a base MPNN, which form a $k$-indexed derivative tensor. This tensor is then used to construct new node features using a higher-order encoder network (as in \citet{maron2018invariant, morris2019weisfeiler}), which are subsequently passed to a downstream GNN for final prediction. We theoretically analyze $k$-\ourmethod, showing that it can distinguish between graphs that are indistinguishable to the $k$-WL test\footnote{We refer here to the folklore WL test rather than the oblivious variant; see \citet{morris2023weisfeiler} for a detailed discussion of the differences.}, resulting in a model that is more expressive than any of its individual components alone. Furthermore, we leverage results from \citet{zhang2024beyond} to analyze $k$-\ourmethod's ability to compute homomorphism counts, demonstrating its capacity to capture fine-grained structure.

% Furthermore, we leverage results from \citet{zhang2024beyond} to show that $k$-\ourmethod\ can compute homomorphism counts for $k$-apex forests—graphs that become trees upon removal of $k$ nodes—demonstrating its capacity to capture fine-grained structure.

\looseness=-1
Efficiently computing high-order node derivatives is a core component of \ourmethod. To this end, we develop a novel algorithm for computing these derivatives via an analytic, message-passing-like procedure. This approach yields two key benefits. First, being fully analytic, it enables differentiation through the derivative computation itself, allowing \ourmethod\ to be trained end-to-end. Second, the message-passing-like structure exploits the sparsity of graph data, improving scalability (see Section~\ref{sec:theory} for a detailed complexity analysis). Combined with the empirical observation that \ourmethod\ remains effective even when using base MPNNs with small hidden dimensions, our method scales to benchmarks containing larger graphs that are often out of reach for other expressive GNN architectures.

\textbf{Our contributions.}  
(1) We introduce k-\ourmethod, a novel expressive GNN family that integrates derivative-based embeddings;  
(2) We provide a theoretical analysis of its expressivity and computational properties;  
(3) We propose an algorithm for efficient derivative computation on graphs;  
(4) We demonstrate consistently high empirical performance across seven standard graph classification and regression benchmarks. Additionally, we show that \ourmethod\ scales to benchmarks containing larger graphs that are typically out of reach for many expressive architectures on standard hardware.

\section{Preliminaries and Previous Work}
\label{sec:preliminaries}

% \looseness=-1
\textbf{Notation.} 
 The size of a set $\gS$ is denoted by $|\gS|$. $\oplus$ denotes concatenation. We denote graphs by $\gG = (\mA, \mX)$, where $\mA \in \sR^{n \times n}$ is the adjacency matrix and $\mX \in \sR^{n \times d}$ is the node feature matrix, with $n$ nodes and $d$-dimensional features per node.  The node set of a graph is denoted by $V(\gG)$.

% The set of $k$-tuples  of nodes of a graph is denoted by $V^k(\gG)$.

\textbf{MPNNs and GNN expressivity.} 
MPNNs\citep{gilmer2017neural} are a widely used class of GNNs that update node representations through iterative aggregation of local neighborhood information. At each layer $t$, the representation $\vh_v^{(t)}$ of node $v$ is updated via:
\begin{equation}
\label{eq:mpnn}
 \vh_v^{(t)} = \mlp^{(t)}\left(\vh_v^{(t-1)}, \agg^{(t)}\left(\left\{\vh_u^{(t-1)} : u \in \gN(v)\right\}\right)\right),   
\end{equation}

where $\gN(v)$ denotes the neighbors of node $v$ in the graph and $\agg^{(t)}$ are permutation-preserving aggregation function. After $T$ message-passing layers, a graph-level representation is typically obtained by applying a global pooling operation over all node embeddings:
\begin{equation}
\label{eq:mpnn_pooling}
    \vh^{\text{out}} = \agg^{\text{out}}\left(\left\{\vh_v^{(T)} \mid v \in V(\gG)\right\}\right),
\end{equation}

MPNNs have inherent expressivity limitations~\citep{morris2019weisfeiler, xu2018powerful, weisfeiler1968reduction}, as they cannot distinguish graphs that are indistinguishable by the 1-WL test. To address this, a wide range of more expressive GNN architectures have been proposed~\citep{morris2023weisfeiler, maron2018invariant, puny2023equivariant, cotta2021reconstruction, rieck2019persistent, sato2021random, dwivedi2023benchmarking}. (See Appendix~\ref{apx:prev_work} for details or \citep{sato2020survey, morris2023weisfeiler, jegelka2022theory, li2022expressive, zhang2024expressive-b} for comprehensive surveys.)

% In this work, we focus on one such family-Subgraph GNNs-which serve as a foundation for our analysis of \ourmethod.

\looseness=-1
\textbf{Subgraph GNNs.}
 Subgraph GNNs \citep{zhang2021nested, cotta2021reconstruction, bevilacqua2021equivariant, frasca2022understanding,zhang2023rethinking,zhang2023complete,bar2024flexible} are expressive GNNs that operate over a set of subgraphs $\mathcal{B}_\gG = \{\gS_\vv \mid \vv \in V^k(\gG) \}$, where each subgraph $\gS_\vv$ corresponds to a node or $k$-tuple of nodes from the input graph $\gG$. In this work, we focus on the widely adopted node-marking DS-GNNs \citep{cotta2021reconstruction,bevilacqua2021equivariant,papp2022theoretical} and their higher-order generalization, $k$-OSAN \citep{qian2022ordered}, though we note that many other variants of Subgraph GNNs exist. For precise defintions of DS-GNN and k-OSAN, see Appendix \ref{apx:definitions}.

% \looseness=-1
% A DS-GNN model $\osan$ consists of a base MPNN $\mpnn$ and a downstream MPNN $\gnn$. Given an input graph $\gG = (\mA, \mX)$, each subgraph $\gS_v$ in the bag $\gB_\gG$ is a copy of $\gG$ in which the feature of node $v$ is augmented with a marking vector—a one-hot vector $\ve_v$ that distinguishes it from the others. That is, $\gS_v = (\mA, \mX \oplus \ve_v)$. The base MPNN is then applied independently to each subgraph, producing a new node feature for each corresponding node: $\vh_{v}^{\text{sub}} = \mpnn(\gS_v)$ according to Equations \ref{eq:mpnn} and \ref{eq:mpnn_pooling}. The resulting node feature matrix $\vh^{\text{sub}}$ is then passed to the second MPNN $\gnn$ to generate a graph-level prediction. The $k$-OSAN framework generalizes this idea to subgraphs indexed by $k$-tuples of nodes;  a formal definition of $k$-OSAN is provided in Appendix~\ref{apx:definitions}. $k$-OSAN networks have been shown to be highly expressive: they can distinguish between graphs that are indistinguishable by the $k$-WL test \citep{qian2022ordered}, and can compute homomorphism counts for $k$-apex forests \citep{zhang2024beyond}.

\looseness=-1
\textbf{Derivatives of MPNNs.} 
Derivatives frequently appear in the analysis of GNNs. In the study of \textit{oversquashing}---the failure of information to propagate through graph structures~\citep{alon2020bottleneck, topping2021understanding, di2023over, didoes}---derivatives play a key role (For a comprehensive overview, see ~\citet{akansha2023over}). Node derivatives are also used in GNN explainability~\citep{ying2019gnnexplainer,luo2020parameterized,baldassarre2019explainability,pope2019explainability}. Gradient-based approaches such as Sensitivity Analysis, Guided Backpropagation~\citep{baldassarre2019explainability}, and Grad-CAM~\citep{pope2019explainability} rely on derivative magnitudes. Finally, several standalone works make use of node-based derivatives. E.g., \citet{arroyo2025vanishing} use node derivatives to draw a connection between vanishing gradients, and over-smoothing, and \citet{keren2024sequential} propose aggregation functions, designed to induce non-zero mixed node derivatives. See Appendix~\ref{apx:prev_work} for further discussion.
\vspace{-10pt}

\section{Method}
\label{sec:method}
% \vspace{-1pt}

We begin this section with a discussion motivating the use of MPNN derivatives and their contribution to improving expressivity. We then introduce $k$-\ourmethod, an expressive GNN architecture that enhances representational power by leveraging derivatives of a base MPNN. We first present the full details of the $1$-\ourmethod\ model, followed by an overview of its higher-order generalization. A comprehensive treatment of the general $k$-order case is provided in Appendix~\ref{apx:method}.  We emphasize that in $k$-\ourmethod, the parameter $k$ refers to the number of distinct nodes with respect to which derivatives are taken, not the total derivative order. For instance, $1$-\ourmethod\ uses derivatives of the form $\frac{\partial^\alpha \vh}{\partial^\alpha X_v}$, but not mixed derivatives such as $\frac{\partial^{\alpha_1 + \alpha_2} \vh}{\partial^{\alpha_1} X_v, \partial^{\alpha_2} X_u}$, which involve multiple nodes.

\subsection{Motivation}
\label{sec:motivation}

\looseness=-1
Beyond being a widely used and informative quantity in GNN analysis, MPNN derivatives can enhance expressivity. To build intuition to why that is, we begin with a simple example, showing that first-order derivatives allow us to count triangles, a task that standard MPNNs cannot perform. Consider the model $\mpnn(\mA, \mX) = \mA^3 \mX$, which can be implemented by a three-layer GCN with identity activation. For any node $v$, the derivative of its final feature vector $\vh_v$ with respect to its own input feature vector $\mX_v$ is exactly $\mA^3_{v,v}$. aggregating these derivatives, we can compute $\sum_v \mA^3_{v,v}/6$, which is exactly the number of triangles in the graph. 

\looseness=-1  
To illustrate how higher-order derivatives further enhance expressivity, we recall that GNNs with marking \citep{papp2022theoretical} improve expressive power over standard MPNNs by selecting a node \footnote{While GNNs with marking can select multiple nodes, for clarity we focus on the single-node case. The general case is discussed in Appendix~\ref{apx:motivation}.}
$v$ in the input graph $\gG = (\mA, \mX)$ and attaching a unique identifier to it, yielding the modified input $\mX + \epsilon \ve_v$ for some $\epsilon \in \sR$. The output is then $\vh^{*} = \mpnn(\mA, \mX + \epsilon \ve_v)$.   If $\mpnn$ employs an analytic activation  function $\sigma$ (See definition \ref{def:analytic_function} in the appendix), then $\mpnn$ itself is analytic. Consequently, its output can be approximated by the Taylor expansion:  
\begin{equation}
     \mpnn(\mA, \mX + \epsilon \ve_v) \approx \sum_{i=0}^m 
     \frac{\partial^i \mpnn(\mA, \mX + x \ve_v)}{\partial x^i}\Big|_{x=0}  \cdot \epsilon^i 
     = \sum_{i=0}^m \frac{\partial^i \vh^{\text{out}}}{\partial \ve_v^i} \cdot \epsilon^i, 
\end{equation}  
where $\vh^{\text{out}} = \mpnn(\mA, \mX)$ is the output of $\mpnn$ without marking.  This shows that by leveraging the higher-order derivatives of an MPNN, one can approximate the output of a GNN with marking to arbitrary precision. As a result, derivatives strictly extend the expressive power of MPNNs. An expanded intuitive discussion of these expressivity gains, along with the natural connection between \ourmethod and subgraph GNNs, is provided in Appendix~\ref{apx:motivation}.

\subsection{The 1-\ourmethod\ architecture}
\label{sec:1-ourmethod}

\begin{figure}[t]
    \centering
    \vspace{-36pt}    \includegraphics[width=0.85\textwidth, trim=0 50 0 50, clip]{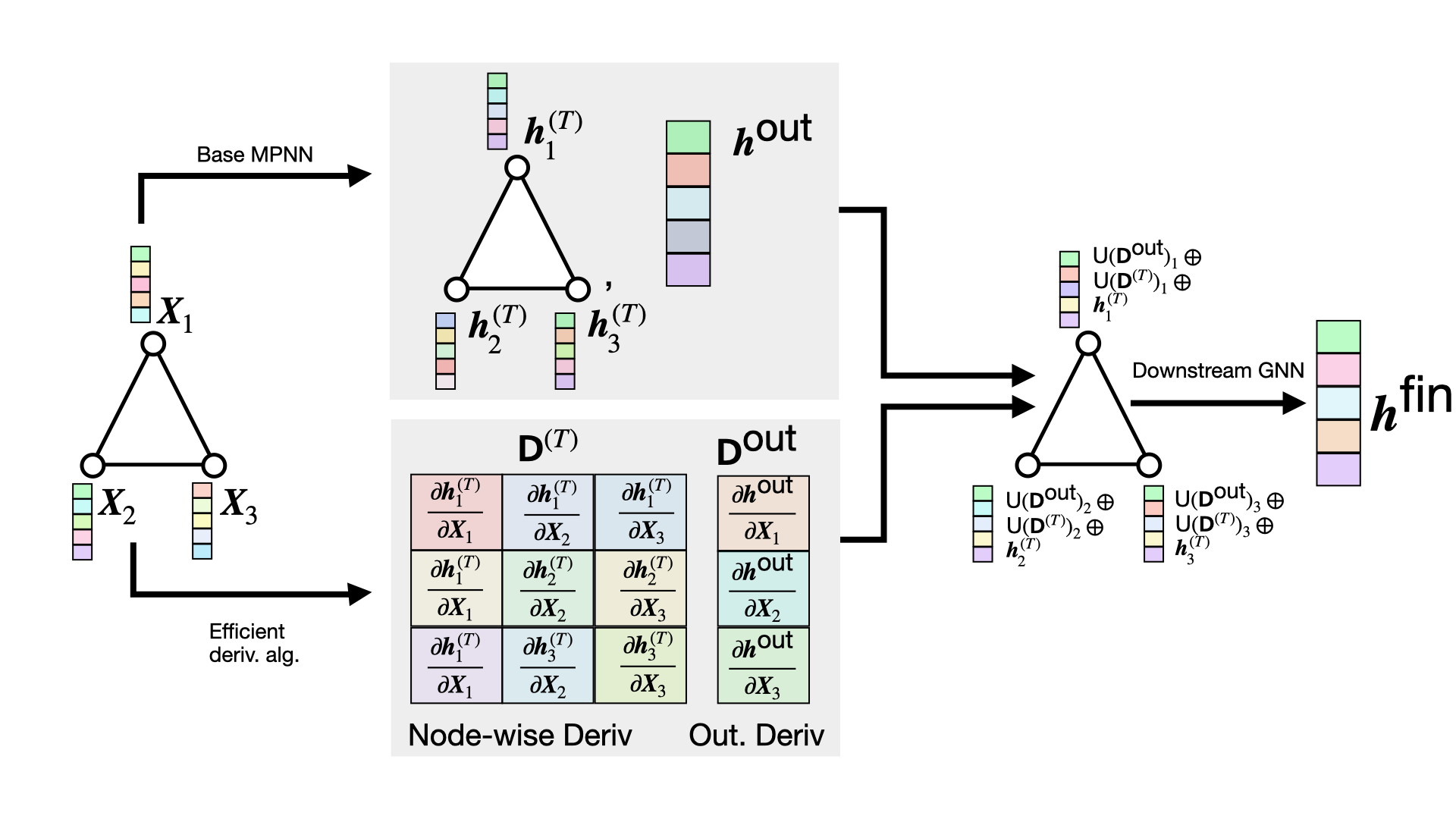}
    % \vspace{-10pt}
    \caption{The \ourmethod\ pipeline. Given an input graph, we compute the outputs and derivatives of a base MPNN. The derivatives are processed by two encoders (denoted $\pool$) to produce features that are concatenated with the base  MPNN outputs and passed to a downstream GNN for final prediction.}\label{fig:ourmethod}
    % \vspace{-10pt}
\end{figure}

\looseness=-1
\textbf{Overview.}  A $1$-\ourmethod~ model, denoted $\ourgnn$, consists of two GNNs: a base MPNN $\mpnn$ and a downstream network $\gnn$ as well as two derivative encoder networks $\pool^{\text{node}}$ and $\pool^{\text{out}}$. Given an input graph $\gG$, the computation of $\ourgnn(\gG)$ proceeds in four steps (see Figure \ref{fig:ourmethod}):  (1) Compute the final node representations and output of $\mpnn$; (2) Compute derivative tensors of the output with respect to the input node features (defined below); (3) Use $\pool^{\text{node}} $ and $\pool^{\text{out}}$ to extract new derivative informed node features from the derivative tensors. (4) Apply $\gnn$ to the derivative-informed features. Importantly, we develop an efficient algorithm for step (2) that enables backpropagation through the derivative computation itself, making all four above steps differentiable (see Section~\ref{sec:gradient_computation}). Consequently, the entire \ourmethod\ model can be trained end-to-end, a strategy we adopt in all our experiments.

\looseness=-1
\textbf{Steps 1 \& 2.} In the first two stages, we compute the final node representations $\vh^{(T)}$ and the output vector $\vh^{\text{out}}$ using the base MPNN $\mpnn$, along with their corresponding derivative tensors, defined below:
\begin{definition}
    \looseness=-1
    Given a graph $\gG=(\mA, \mX)$ with $n$ nodes, an MPNN $\mpnn$ and an intermediate node feature representation matrix $\vh \in \sR^{n \times d'}$, the derivative tensor of $\tD(\vh) \in \sR^{n\times n \times  d'\times d \times m} $
    is defined by:
    \begin{equation}
         \tD(\vh)[v, u, i, j , \alpha ] = \frac{\partial^{\alpha} \vh_{v,i}}{\partial \mX^{\alpha}_{u,j}},
    \end{equation}
    where $v,u \in V(\gG)$ are nodes, $i \in [d'], j \in [d]$ specify the feature dimensions of the node feature vectors $\vh_v, \mX_u$ respectively, and $\alpha \in [m]$ where  $m \in \sN$ is a hyperparameter specifying the maximum order of derivatives to be considered. 
    Similarly, given a graph-level prediction vector $\vh^\text{out} \in \sR^{d'}$ the derivative tensor $\tD(\vh^\text{out}) \in \sR^{  n \times d' \times d \times m}$ is defined by:
    \begin{equation}
         \tD(\vh^\text{out})[ u,i, j ,\alpha ] = \frac{\partial^{\alpha} \vh^\text{out}_{i}}{\partial \mX^{\alpha}_{u,j}}.
    \end{equation}
\end{definition}
In 1-\ourmethod, we compute the output derivative tensor $\tD^{\text{out}} = \tD(\vh^{\text{out}})$, which captures how the output of the base MPNN $\mpnn$ responds to perturbations in the input node features. In parallel, we compute the node-wise derivative tensor $\tD^{(T)}$, where $\tD^{(t)} = \tD(\vh^{(t)})$ for $t = 1, \dots, T$. These tensors characterize how each node’s representation at layer $t$ changes in response to variations in the input features. Derivative tensors are computed using Algorithm~\ref{alg:derivative_computation}, described in Section~\ref{sec:gradient_computation} and elaborated on in Appendix~\ref{apx:derivative_comutation}. The algorithm leverages the sparsity of the input graph to enable efficient computation of high-order derivatives. Crucially, Algorithm~\ref{alg:derivative_computation} is fully differentiable with respect to the weights of $\mpnn$, enabling end-to-end training of $\ourgnn$.

\textbf{Step 3.}  
In the third stage of our method, we extract new node features from the derivative tensors $\tD^{\text{out}}$ and $\tD^{(T)}$ using the encoder networks $\pool^{\text{node}}$ and $\pool^{\text{out}}$.  
First, as $\tD^{\text{out}} \in \mathbb{R}^{ n  \times d' \times d \times m}$ is a tensor indexed by a single node, it can be directly interpreted as a node feature matrix by flattening the remaining dimensions. We thus define the encoder network $\pool^{\text{out}}$ to be a DeepSets \citep{zaheer2017deep} update:
 \begin{equation}
     \pool^{\text{out}}(\tD^{\text{out}})_v = \mlp(\tD^{\text{out}}[v,\dots]) .
 \end{equation}
 
\looseness=-1
Secondly, since $\tD^{(T)}$ is indexed by pairs of nodes in $\gG$, it shares the structure of the adjacency matrix $\mA$, which is also pairwise-indexed. We can thus define the encoder network $\pool^{\text{node}}: \mathbb{R}^{n^2 \times d' \times d \times m} \to \mathbb{R}^{n \times \bar{d}}$ to be any GNN architecture which maps adjacency matrices with edge features to node feature matrices. To enhance sensitivity to global interactions, we select  $\pool^{\text{node}}$ to be a 2-Invariant Graph Network (IGN)\footnote{We can also employ a modified version of the standard 2-IGN that preserves the sparsity structure of $\tD^{(T)}$. See Appendix~\ref{apx:method} for details.}\citep{maron2018invariant}.

We construct the derivative-informed node features $\vh^{\text{der}}$ by combining information from the base MPNN $\mpnn$, the pooled intermediate derivatives, and the output derivatives:
\begin{equation}
\label{eq:derivative_informed_node_features}
    \vh^{\text{der}}_v =  \vh^{(T)}_v \oplus \pool^{\text{out}}(\tD^{\text{out}}) \oplus \pool^{\text{node}}(\tD^{\text{(T)}}).
\end{equation}

\textbf{Step 4.}  
In the final stage, we replace the original node features of $\gG$ with the derivative-informed features $\vh^{\text{der}}$, and apply a downstream GNN $\gnn$ to produce a graph-level prediction. For the remainder of this work, we assume $\gnn$ is an MPNN, though our approach is compatible with any GNN architecture.

\subsubsection{Efficient derivative tensor computation}
\label{sec:gradient_computation}

We now describe an efficient algorithm for computing the derivative tensors in a 1-\ourmethod~ model $\ourgnn$ with base MPNN $\mpnn$. For clarity, we focus on the case where $\mpnn$ is a GIN \citep{xu2018powerful}, in which case the message-passing and readout functions are given by:
\begin{equation}
\label{eq:gin}
\vh_v^{(t)} = \mlp^{(t)}\big((1 + \epsilon)\vh_v^{(t-1)} + \sum_{u \in \gN(v)} \vh_u^{(t-1)} \big), \quad 
\vh^{\text{out}} = \mlp \big( \sum_{u \in V(\gG)} \vh_u^{(T)} \big).
\end{equation}
An extension of this algorithm to general MPNNs and higher-order mixed derivatives is provided in Appendix~\ref{apx:derivative_comutation}. For convenience, we decompose the node update in Equation~\ref{eq:gin} into two parts: an aggregation step and a DeepSets-based update, given respectively by:
\begin{equation}
\label{eq:gin_combined}
\overbrace{\tilde{\vh}^{(t-1)}_v = (1 + \epsilon)\vh^{(t-1)}_v + \sum_{u \in \gN(v)} \vh^{(t-1)}_u}^{\text{agg. update}}, \quad 
\overbrace{\vh^{(t)}_v = \mlp(\tilde{\vh}^{(t-1)}_v)}^{\text{DeepSets update}}.
\end{equation}
The algorithm is based on the following two observations: First since $\tilde{\vh}^{(t-1)}_v$ is a linear combination of $\vh^{(t-1)}_v$ and its neighboring node features $\{ \vh^{(t-1)}_u \mid u \in \gN(v) \}$, the derivatives of $\tilde{\vh}^{(t-1)}_v$ are likewise linear combinations of the derivatives of $\vh^{(t-1)}_v$ and $\{ \vh^{(t-1)}_u \mid u \in \gN(v) \}$. More explicitly:
\begin{equation}
\label{eq:derivative_gin_agg}
    \tD(\tilde{\vh}^{(t-1)})[v, \dots] = (1 + \epsilon)\tD^{(t-1)}[v, \dots] + \sum_{u \in \gN(v)} \tD^{(t-1)}[u, \dots].
\end{equation}
This computation mirrors the GIN aggregation update in Equation~\ref{eq:gin}, leveraging the sparsity of the graph. Second, since the DeepSets update applies an MLP independently to each node feature $\tilde{\vh}^{(t-1)}_v$, we can apply Faà di Bruno’s  formula (see e.g. \citep{hardy2006combinatorics}) to compute the derivatives of each $\vh^{(t)}_v$ based on the derivatives of $\tilde{\vh}^{(t-1)}_v$. This results in a “DeepSets-like” derivative update, allowing us to compute $\tD^{(t)}$ directly from $\tD(\tilde{\vh}^{(t-1)})$. Iteratively applying these two steps yields the final node-wise derivative tensor $\tD^{(T)}$ through a differentiable, message-passing-like procedure. A similar approach allows efficient computation of $\tD^{\text{out}}$ from $\tD^{(T)}$. See Appendix~\ref{apx:derivative_comutation} for full details of the algorithm. 

\textbf{Computational Complexity.} An important property of the above algorithm is that, for sparse graphs or relatively shallow base MPNNs, it is computationally efficient.  To see this, first notice that since $\vh^{(0)} = \mX$, the tensor $\tD^{(0)}$ is extremely sparse, satisfying:
\begin{equation}
\label{eq:derivative_init}
\tD^{(0)}[v,u,i, j, \alpha] =
\begin{cases}
1 & \text{if } v= u,\ i = j,\ \alpha = 1, \\
0 & \text{otherwise}.
\end{cases}
\end{equation}
Thus, it can be stored efficiently using sparse matrices. At each layer $t$, the derivative aggregation step (Equation~\ref{eq:derivative_gin_agg}) increases the number of non-zero entries only in proportion to the number of node pairs that exchange messages for the first time. Thus, for small values of $t$ or for sparse graphs $\gG$, the derivative tensor $\tD^{(t)}$  remains sparse. Moreover, the algorithm's message-passing-like structure ensures runtime efficiency as well. For a full complexity analysis of our algorithm, see Section \ref{sec:theory}.

\subsection{k-\ourmethod via mixed derivatives}
\label{sec:higher_order_ourmethod}

We now generalize $1$-\ourmethod, which operates on single-node derivatives (i.e., derivatives of the form $\frac{\partial^\alpha \vh^{(T)}_v}{\partial \mX_u^\alpha}$ or $\frac{\partial^\alpha \vh^{\text{out}}}{\partial \mX_u^\alpha}$), to $k$-\ourmethod, which extracts information from mixed partial derivatives across $k$ nodes (i.e., $\frac{\partial^{\alpha_1+\dots+\alpha_k} \vh^{(T)}_v}{\partial \mX{u_1}^{\alpha_1} \cdots \partial \mX{u_k}^{\alpha_k}}$, or$\frac{\partial^{\alpha_1+\dots+\alpha_k} \vh^{\text{out}}}{\partial \mX{u_1}^{\alpha_1} \cdots \partial \mX{u_k}^{\alpha_k}}$). $k$-\ourmethod\ offers increased expressive power at the cost of greater computational complexity. We begin by formally defining the $k$-indexed derivative tensors. For simplicity, we assume the input node features are 1-dimensional, handling the more general case in Appendix \ref{apx:method}.
\begin{definition}
    Given a graph $\gG=(\mA, \mX)$ with $n$ nodes, and an MPNN $\mpnn$ and an intermediate node feature matrix $\vh \in \sR^{n \times d'}$, the $k$-indexed derivative tensor of $\tD_k(\vh) \in \sR^{n \times n^{k}\times d' \times m^k} $
    is defined by:
    \begin{equation}
         \tD_k(\vh)[v, \vu, i, \valpha] = \frac{\partial^{\alpha_1 + \cdots + \alpha_k} \vh_{v,i}}{\partial \mX^{\alpha_1}_{u_1} \cdots \partial\mX^{\alpha_k}_{u_k}}.
    \end{equation}
    where $v \in V(\gG)$, $\vu=(u_1, \dots, u_k) \in V^k(\gG)$ , $i \in [d']$ and $\valpha = (\alpha_1, \dots, \alpha_k) \in [m]^k$. $\tD_k$  is defined similarly for graph-level prediction vectors.
\end{definition}
The $k$-indexed derivative tensors capture how the output and node representations of $\mpnn$ change under joint perturbations to the features of $k$ nodes, thereby encoding rich higher-order structural interactions within the graph. We compute these tensors using an extension of the derivative computation process described in Section \ref{sec:gradient_computation} (See Appendix \ref{apx:derivative_comutation} for more details).

As in the $1$-\ourmethod\ case, a $k$-\ourmethod\ model $\ourgnn$ consists of a base MPNN $\mpnn$ and a downstream network $\gnn$. Given an input graph $\gG$, the computation of $\ourgnn(\gG)$ proceeds in the same four stages established earlier. First, we compute the output and final node representations of $\mpnn$ along with the $k$-indexed derivative tensors $\tD_k^{(T)}$ and $\tD_k^{\text{out}}$. We then use $\vh^{(T)}$, $\tD_k^{(T)}$ and $\tD_k^{\text{out}}$ to extract new derivative informed node features. These node features are computed through:
\begin{equation}
    \vh^{\text{der}}_v = \mlp \left( \vh^{(T)}_v \oplus \pool^{\text{node}}(\tD_k^{(T)})_v \oplus \pool^{\text{out}}(\tD_k^{\text{out}})_v \right).
\end{equation}
where $\pool^{\text{node}}$  and $\pool^{\text{out}}$ are learned $(k+1)$-IGN and $k$-IGN encoders respectively.
Finally, we substitute the original node features of graph $\gG$ with $\vh^{\text{der}}$, and input the resulting graph into $\gnn$ to generate the final graph-level prediction. For more details, see Appendix \ref{apx:method}.

\section{Theoretical Analysis}
\label{sec:theory}
In this section, we analyze the expressive power and computational complexity of $k$-\ourmethod. Formal statements and complete proofs of all results in this section are provided in Appendix~\ref{apx:proofs}.

\textbf{Expressive power.} 
\looseness=-1
We begin by formally relating $k$-\ourmethod  to $k$-OSAN subgraph GNNs, revealing new insights into HOD-GNNs’ expressive power, their position in the WL hierarchy, and their ability to compute complex homomorphism counts.

% Section~\ref{sec:motivation} introduced an intuitive connection between Subgraph GNNs and $1$-\ourmethod. The next theorem makes this connection precise and extends it to the higher-order case.

\begin{theorem}[informal]
\label{thm:ourmethod_vs_osan}
Any  $k$-OSAN model  can be approximated by a $k$- \ourmethod model using an analytic activation function, to any precision. Thus, there exist non-isomorphic graphs that are indistinguishable by the folklore $k$-WL test but are distinguishable by $k$-\ourmethod. Additionally,  $k$-\ourmethod\ is able to compute the homomorphism count of $k$-apex forest graphs.
\end{theorem}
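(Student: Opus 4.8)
The plan is to prove the three claims in sequence, deriving the second and third as consequences of the first. The heart of the argument is the approximation statement: a $k$-\ourmethod\ model can approximate any $k$-OSAN model to arbitrary precision. Once this is established, the separation from folklore $k$-WL and the homomorphism-counting capability follow by importing known results about $k$-OSAN and subgraph GNNs.

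For the approximation, I would build directly on the marking-via-derivatives intuition from Section~\ref{sec:motivation}. Recall that $k$-OSAN processes, for every $k$-tuple $\vu = (u_1,\dots,u_k) \in V^k(\gG)$, a copy of $\gG$ in which the nodes $u_1,\dots,u_k$ are marked, and then aggregates the resulting node representations across subgraphs. I would define, for each tuple $\vu$, the marked-output map $F_\vu(\epsilon_1,\dots,\epsilon_k) = \mpnn(\mA,\, \mX + \sum_{i=1}^k \epsilon_i \ve_{u_i})$. Since $\mpnn$ uses an analytic activation, $F_\vu$ is analytic in $(\epsilon_1,\dots,\epsilon_k)$, and its multivariate Taylor coefficients at the origin are precisely the mixed partial derivatives $\frac{\partial^{\alpha_1+\dots+\alpha_k}\vh}{\partial \mX_{u_1}^{\alpha_1}\cdots\partial \mX_{u_k}^{\alpha_k}}$ --- i.e.\ exactly the entries of the $k$-indexed derivative tensor $\tD_k$. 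Truncating this expansion at order $m$ and evaluating it at the marking magnitudes used by the target $k$-OSAN recovers the marked node representations up to an error that I would bound uniformly in terms of $m$ and the convergence radius of $F_\vu$ (using that there are finitely many tuples and finitely many graphs of bounded size). I would then invoke the universality of IGNs to argue that $\pool^{\text{node}}$ and $\pool^{\text{out}}$, acting on $\tD_k^{(T)}$ and $\tD_k^{\text{out}}$, can approximate this truncated-Taylor reconstruction as an equivariant polynomial of the tensor entries, so that the derivative-informed features $\vh^{\text{der}}$ encode (approximately) the full collection of marked representations. Finally, the downstream MPNN $\gnn$ is set to mimic the across-subgraph aggregation of $k$-OSAN, completing the approximation.

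The separation from folklore $k$-WL then follows from a counting/continuity argument: it is known that there exist non-isomorphic graphs $\gG_1,\gG_2$ indistinguishable by folklore $k$-WL yet separated by $k$-OSAN, so their $k$-OSAN outputs differ by some positive gap $\delta$. Choosing the approximation error below $\delta/2$ (possible by the previous paragraph, since we need only approximate on this single fixed finite pair) guarantees that the corresponding $k$-\ourmethod\ outputs also differ, hence $k$-\ourmethod\ distinguishes $\gG_1,\gG_2$. For the homomorphism-counting claim, I would import the characterization of \citet{zhang2024beyond}, under which $k$-OSAN computes homomorphism counts of $k$-apex forests; since these counts are integer-valued and $k$-\ourmethod\ approximates $k$-OSAN to arbitrary precision, rounding the approximate output recovers the exact count.

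The main obstacle is the approximation step itself, and within it two points require care. First, the Taylor truncation must be made rigorous: the marking magnitudes of the target $k$-OSAN need not lie inside the convergence radius of $F_\vu$, so I would either restrict attention to $k$-OSAN variants using markings small enough to be captured by the expansion (arguing these retain full expressive power, since an infinitesimally perturbed node is still uniquely identified) or rescale the activation and marking to bring the evaluation point within the analytic region. Second, showing that the IGN encoders can realize the required equivariant polynomial reconstruction hinges on the (technical) universality results for $k$-IGNs, and I would need to verify that the relevant symmetry group and tensor orders match those produced by the derivative tensors $\tD_k$.
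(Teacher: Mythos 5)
Your proposal is correct and follows essentially the same route as the paper: Taylor-expand the marked MPNN output in the marking magnitudes, identify the Taylor coefficients with the entries of the $k$-indexed derivative tensor, truncate uniformly over the finitely many graphs under consideration, realize the truncated reconstruction and the across-subgraph aggregation with the DeepSet/IGN encoders, match the downstream GNN, and then import the $k$-OSAN results of \citet{qian2022ordered} and \citet{zhang2024beyond} for the $k$-WL separation and the $k$-apex-forest homomorphism counts. The only divergence is the convergence-radius obstacle you flag: the paper resolves it not by shrinking the markings or rescaling, but by assuming without loss of generality that every activation is analytic with \emph{infinite} radius of convergence (replacing each MLP by a universal approximator built from such an activation), so the Taylor series converges at the actual marking values.
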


% \begin{theorem}[informal]
% \label{thm:ourmethod_vs_osan}
% Any  $k$-OSAN model  can be approximated to any precision by $k$-\ourmethod. 
% \end{theorem}

% As a consequence of Theorem~\ref{thm:ourmethod_vs_osan}, we obtain the following corollary, based on analogous results from \citet{qian2022ordered} and \citet{zhang2024beyond} concerning the expressive power of $k$-OSAN.
% \begin{corollary}
% \label{cor:ourmethod_vs_k_wl}
%     There exist non-isomorphic graphs that are indistinguishable by the folklore $k$-WL test but are distinguishable by $k$-\ourmethod. Additionally,  $k$-\ourmethod\ is able to compute the homomorphism count of $k$-apex forest graphs.
% \end{corollary}

% Corollary~\ref{cor:ourmethod_vs_k_wl} shows that a $k$-\ourmethod\ model—combining a base and downstream MPNN (both 1-WL bounded) with encoder networks (typically $(k)$- and $(k{+}1)$-IGN, bounded by folklore $k$-WL)—\textbf{strictly enhances} the expressivity of all components.

Theorem~\ref{thm:ourmethod_vs_osan} shows that a $k$-\ourmethod\ model, combining a base and downstream MPNN (both 1-WL bounded) with encoder networks (typically $(k)$ and $(k{+}1)$-IGN, bounded by folklore $k$-WL) \textbf{strictly enhances} the expressivity of all components.

The proof of Theorem  \ref{thm:ourmethod_vs_osan} relies on the analyticity of the activation functions used by our base MPNN and the use of higher-order derivatives. However, in what follows, we show that even when restricted to first-order derivatives and using the commonly employed ReLU activation, \ourmethod\ remains strictly more expressive than a widely used technique for enhancing GNN expressivity: incorporating Random Walk Structural Encodings (RWSEs)\citep{dwivedi2021graph} into a base MPNN.

\begin{theorem}[Informal]
\label{thm:ourmethod_vs_rwse}
Even when limited to first-order derivatives and ReLU activations, $1$-\ourmethod\ is strictly more expressive than MPNNs enhanced with random walk structural encodings.
\end{theorem}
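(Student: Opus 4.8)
The plan is to prove two directions: (i) $1$-\ourmethod\ restricted to first-order derivatives and ReLU activations can reproduce any RWSE-augmented MPNN, and (ii) a concrete pair of graphs is separated by $1$-\ourmethod\ but not by the RWSE family. The conceptual engine behind both directions is that first-order self-derivatives of a linear message-passing operator are exactly the return probabilities defining RWSE, while the off-diagonal derivatives encode strictly more (the full walk structure).

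For (i), I would fix the base $\mpnn$ to be a symmetric-normalized GCN and force it into the linear regime of ReLU: take nonnegative input features, identity channel weights, and large positive biases so every pre-activation is positive. Then $\mathrm{ReLU}'\equiv 1$ along the computation, the biases vanish under differentiation, and the first-order derivative tensor satisfies $\tD^{(T)}[v,u,i,j,1]=(\hat{\mA}^{T})_{vu}\,\delta_{ij}$, where $\hat{\mA}=\mD^{-1/2}\mA\mD^{-1/2}$; carrying successive powers in separate channels of a single linear network yields all $(\hat{\mA}^{\ell})_{vu}$ for $\ell\le L$. Using $\hat{\mA}^{\ell}=\mD^{1/2}\mM^{\ell}\mD^{-1/2}$ with $\mM=\mD^{-1}\mA$, the diagonal entries obey $(\hat{\mA}^{\ell})_{vv}=(\mM^{\ell})_{vv}$, which is precisely the length-$\ell$ RWSE value at $v$. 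Since $\tD^{(T)}$ is pairwise-indexed, the encoder $\pool^{\text{node}}$ (a $2$-IGN) can read off these diagonal entries and emit them as node features; concatenating with $\vh^{(T)}$ and choosing $\gnn$ to replicate the target RWSE-MPNN shows $1$-\ourmethod\ is at least as expressive. Crucially, this construction never leaves the linear regime, so it honors the ReLU restriction in the statement.

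For the strict separation (ii), I would exploit that $\tD^{(T)}$ retains the off-diagonal walk counts $(\hat{\mA}^{\ell})_{vu}$, $v\ne u$, that RWSE discards: the $\ell=1$ slice already recovers $\hat{\mA}$, hence the adjacency of $\gG$, so $\pool^{\text{node}}$ effectively receives the graph and can realize any folklore-$2$-WL computation on it. On the other side, the power of an RWSE-augmented MPNN is bounded by $1$-WL run on features colored by per-node return-probability vectors. It therefore suffices to exhibit non-isomorphic $G_1,G_2$ that are (a) \emph{walk-regular} and \emph{cospectral}, so that every node of both graphs carries an identical RWSE vector and $1$-WL with RWSE colors cannot separate them, yet (b) distinguishable by folklore $2$-WL, hence by $\pool^{\text{node}}$ acting on the recovered adjacency. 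Combined with (i), this gives strict containment.

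The main obstacle is pinning down the pair in (b). Walk-regularity plus cospectrality is exactly what kills RWSE, but that same regularity makes many natural candidates—strongly regular or distance-regular cospectral families such as the rook/Shrikhande pair—\emph{indistinguishable} by folklore $2$-WL, which would break the argument. The crux is to select a cospectral, walk-regular pair that is nonetheless $2$-WL-separable; I would search among vertex-transitive (hence walk-regular) cospectral pairs that are not distance-regular and verify $2$-WL-separability of the recovered normalized adjacency directly. A secondary technical point is confirming that the linear-regime ReLU construction in (i) reproduces the intended linear operator's derivatives \emph{exactly} (not merely approximately) and that the base, encoder, and downstream widths suffice to emit the full RWSE vector; both are routine once the operating point is fixed.
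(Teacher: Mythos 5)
Your part~(i) is essentially the paper's construction: force the ReLU into its linear regime on nonnegative features so that the first-order self-derivatives become the diagonal entries of powers of a normalized adjacency, read them off with the pairwise encoder, and hand the reconstructed RWSE vector to a downstream copy of the target network. The paper realizes the same idea slightly differently (it pads the input with a constant all-ones block and uses row-normalized aggregation, which keeps every pre-activation equal to $1$ so no bias trick is needed, and it exposes each power $\tilde{\mA}^{\,t}$ in a dedicated coordinate by freezing the first $d+t$ channels at layer $t$), but the substance is identical and your version is correct.

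Part~(ii) contains a genuine gap. Your argument reduces the strict separation to exhibiting a pair of non-isomorphic graphs that are simultaneously walk-regular, cospectral (so all RWSE vectors coincide and the RWSE-augmented MPNN collapses to $1$-WL on regular graphs), and yet separable by the derivative encoder --- and you never produce such a pair. You correctly observe that the obvious candidates (rook/Shrikhande and other strongly regular cospectral families) fail the separability requirement, and you leave the search as an open step; but this step \emph{is} the theorem. There is also an overreach in the upper half of your claim: a $2$-IGN encoder acting on the recovered adjacency is not known to realize arbitrary folklore $2$-WL computations (plain $k$-IGNs are bounded by the oblivious $k$-WL test, and for $k=2$ this does not give you folklore $2$-WL power), so even if you found a cospectral walk-regular pair separable by folklore $2$-WL you would still owe an argument that \emph{your encoder} separates it. The paper avoids both problems at once: it takes a specific pair of quartic vertex-transitive graphs ($\mathrm{Qt}15$ and $\mathrm{Qt}19$), cites a prior result that RWSE-augmented MPNNs cannot distinguish them, and then separates them with a very elementary statistic computable from the first-order off-diagonal derivatives $(\mA_{vu},\mA^2_{vu})$ --- namely, whether there exist adjacent vertices with no walk of length two between them --- implemented as a filter-and-sum $2$-IGN. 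If you want to salvage your route, you should either adopt such a concrete pair together with an explicit low-level distinguishing feature, or prove existence of a pair meeting all three of your conditions; as written, the separation direction is not established.
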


\looseness=-1
The first part of Theorem~\ref{thm:ourmethod_vs_rwse} is constructive: it shows that a simple initialization of the base MPNN's weights yields derivatives equal to RWSEs. In our experiments, we use a slightly modified version of this initialization (see Appendix~\ref{app:experimental_details}), allowing \ourmethod to serve as a learnable extension of RWSE.

\textbf{Space and time complexity.} To conclude this section, we analyze the computational complexity of $k$-\ourmethod\ and compare it to other expressive architectures, namely, $(k{+}1)$-IGN and $k$-OSAN. 
We show that $k$-\ourmethod\ achieves better complexity when using relatively shallow base MPNNs, while maintaining comparable complexity with deeper ones. The primary source of computational overhead in $k$-\ourmethod\ lies in the derivative tensor computation, and the encoder network forward pass. We now analyze each of these components.

First, while $k$-\ourmethod\ computes derivative tensors $\tD_k^{(t)}$ with $O(n^{k+1})$ potential entries, these tensors are sparse for relatively shallow base MPNNs. Moreover, each $\tD_k^{(t)}$ can be efficiently computed from $\tD_k^{(t-1)}$. This is formalized in the following proposition:

\begin{proposition}
\label{prop:derivative_complexity}
In a $k$-\ourmethod\ model applied to a graph with $n$ nodes and maximum degree $d$ The number of non-zero entries in $\tD^{(t)}$ is at most $O\left(n \cdot \min \{ n^{k},\; d^{k \cdot t} \} \right)$. Additionally, each $\tD^{(t)}$ can be computed from $\tD^{(t-1)}$ in time $O(d\cdot n \cdot \min\{n^k, d^{k\cdot (t-1)}\})$.
\end{proposition}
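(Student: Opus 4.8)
The plan is to prove the two claims—the sparsity bound on the number of nonzero entries of $\tD^{(t)}$, and the per-layer update cost—by induction on the layer index $t$, exploiting the message-passing-like recurrence established in Equation~\ref{eq:derivative_gin_agg} together with the locality of influence in MPNNs. The key structural fact driving both bounds is that a derivative entry $\tD^{(t)}[v, \vu, \dots]$ can be nonzero only if each source node $u_i$ lies within the $t$-hop receptive field of the target node $v$; infinitesimal perturbations at $u_i$ cannot affect $\vh_v^{(t)}$ otherwise. I would first make this receptive-field claim precise and prove it by induction: the aggregation update (Equation~\ref{eq:derivative_gin_agg}) only combines $\tD^{(t-1)}[v,\dots]$ with $\tD^{(t-1)}[w,\dots]$ for $w \in \gN(v)$, and the Faà di Bruno (DeepSets) update acts entrywise in the node index $v$, so neither step can introduce a source node farther than one additional hop from $v$.

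Given the receptive-field property, the two quantities inside the $\min$ follow from two independent counts of admissible source configurations for each fixed target $v$. For the $d^{k\cdot t}$ term, I would bound the number of nodes within $t$ hops of $v$ by $\sum_{j=0}^{t} d^j = O(d^t)$ in a graph of maximum degree $d$; since $\vu=(u_1,\dots,u_k)$ ranges over $k$-tuples each drawn from this $t$-hop ball, there are at most $O(d^{k t})$ admissible tuples per target node. For the $n^k$ term, I would simply note that each $u_i$ ranges over at most $n$ nodes regardless of $t$, giving at most $n^k$ tuples per target. Multiplying either bound by the $n$ choices of target node $v$ and taking the smaller of the two yields the claimed $O\!\left(n \cdot \min\{n^k,\, d^{kt}\}\right)$. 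The multiplicity index $\valpha \in [m]^k$ and the feature index $i \in [d']$ contribute only constant factors (absorbed into the $O$-notation), so they do not affect the stated asymptotics.

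For the per-layer update cost, I would analyze the two sub-steps of the recurrence separately and account for the cost of touching each nonzero entry of the output tensor $\tD^{(t)}$. The aggregation step (Equation~\ref{eq:derivative_gin_agg}) computes, for each target $v$, a sum over its at most $d$ neighbors of the corresponding slices of $\tD^{(t-1)}$; since each slice carries $O(\min\{n^k, d^{k(t-1)}\})$ nonzeros by the inductive sparsity bound, summing over all $n$ targets and their $\le d$ neighbors gives $O\!\left(d \cdot n \cdot \min\{n^k, d^{k(t-1)}\}\right)$ arithmetic operations. The Faà di Bruno update then applies a fixed bounded-order formula entrywise in $v$, which is a constant-cost-per-entry operation over the already-computed nonzero support and therefore does not dominate. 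Summing the two contributions gives the claimed $O\!\left(d \cdot n \cdot \min\{n^k, d^{k(t-1)}\}\right)$ bound.

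The main obstacle I anticipate is establishing the per-layer cost cleanly, rather than the sparsity count: one must argue that operating on sparse tensors incurs cost proportional to the number of \emph{nonzeros involved} rather than the full $O(n^{k+1})$ dense size, which requires specifying the sparse storage format and checking that the aggregation scatter/gather and the Faà di Bruno combination can both be carried out in time linear in the active support. A secondary subtlety is that the Faà di Bruno formula for the $\alpha$-th derivative involves a sum over integer partitions whose number depends on $m$ and $k$; I would absorb this into the constant since $m$ and $k$ are fixed hyperparameters, but the writeup must state this explicitly so that the bound is honest about which parameters are treated as constants.
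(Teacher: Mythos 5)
Your proposal is correct and follows essentially the same route as the paper: an induction over layers that bounds the per-node non-zero count of $\tD^{(t)}$ using the fact that the aggregation step only unions the $O(\min\{n^k, d^{k(t-1)}\})$-sparse slices of at most $d$ neighbours while the Fa\`a di Bruno step acts entrywise in $(v,\vu)$ and cannot enlarge the set of contributing source nodes, with the identical accounting of $O(d\cdot n\cdot\min\{n^k,d^{k(t-1)}\})$ for the update cost. The only presentational difference is that you derive the $d^{kt}$ factor by explicitly characterizing the support via the $t$-hop receptive ball and counting $k$-tuples in it, whereas the paper runs a direct recursion $s_t = O(\min\{d\cdot s_{t-1},\, n^k\})$ on the non-zero count; your framing is, if anything, slightly more careful about why the Fa\`a di Bruno products (which can activate new mixed entries from lower-order ones) never introduce source nodes outside the receptive field.
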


\looseness=-1

Focusing next on the encoder networks, we show that they can be designed to exploit derivative sparsity for improved efficiency, while retaining the full expressivity of the $k$-OSAN architecture:

\begin{proposition}
\label{prop:encoder_complexity}
In a $k$-\ourmethod\ model, the encoder functions $\pool^{\text{node}}$ and $\pool^{\text{out}}$ can be chosen such that the model retains the expressive power of $k$-OSAN, while the computation of $\pool^{\text{node}}(\tD^{(T)})$ and $\pool^{\text{out}}(\tD^{\text{out}})$ has both time and space complexity $O\left(n \cdot \min\{n^k,\, d^{k \cdot T}\}\right)$.
\end{proposition}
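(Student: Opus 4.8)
The plan is to give explicit, sparsity-aware instantiations of the two encoders and then charge their cost to the non-zero count of the derivative tensors supplied by Proposition~\ref{prop:derivative_complexity}. For the node encoder I would take the restricted update
\begin{equation}
\label{eq:sparse_encoder}
\pool^{\text{node}}(\tD_k^{(T)})_v \;=\; \sum_{\vu \in V^k(\gG)} \phi\!\left(\vh^{(T)}_v,\; \tD_k^{(T)}[v,\vu,:,:]\right),
\end{equation}
where $\phi$ is an MLP applied to the concatenation of the baseline representation $\vh^{(T)}_v$ and the derivative slice at $(v,\vu)$. This is a special case of a $(k{+}1)$-IGN: the per-entry map $\phi$ is a pointwise MLP, the summation over the $k$ marking indices while retaining the output-node index $v$ is a single equivariant contraction in the IGN basis, and broadcasting $\vh^{(T)}_v$ along $\vu$ is the diagonal basis element. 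Thus \eqref{eq:sparse_encoder} lies inside the encoder family of Section~\ref{sec:higher_order_ourmethod}. I would choose $\pool^{\text{out}}$ analogously on $\tD_k^{\text{out}}$, keeping one marking coordinate as the output-node index and pooling the remaining $k{-}1$.

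\textbf{Expressivity.} To show these choices retain $k$-OSAN power, I would reuse the analyticity mechanism behind Theorem~\ref{thm:ourmethod_vs_osan}: the multivariate Taylor expansion of $\mpnn$ in the marking strengths reconstructs the marked-subgraph representation $\vh_v^{\gS_\vu}$ from the pair $(\vh^{(T)}_v,\, \tD_k^{(T)}[v,\vu,:,:])$ to arbitrary precision. Hence a suitable $\phi$ approximates any fixed function of $\vh_v^{\gS_\vu}$, and the sum over $\vu$ in \eqref{eq:sparse_encoder} reproduces the aggregation of $k$-OSAN over its subgraph collection, with the downstream $\gnn$ realizing the node/graph readout. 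Since the update in \eqref{eq:sparse_encoder} is a valid IGN encoder that already reproduces the $k$-OSAN computation, the resulting $k$-\ourmethod model is at least as expressive as $k$-OSAN, as required.

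\textbf{Complexity.} The crucial observation is that $\phi$ sees $\vu$ only through the slice $\tD_k^{(T)}[v,\vu,:,:]$, which is zero on all but the non-zero tuples. I would therefore split the sum in \eqref{eq:sparse_encoder} as
\begin{equation}
\label{eq:sparse_split}
\pool^{\text{node}}(\tD_k^{(T)})_v \;=\; \sum_{\vu:\,\tD_k^{(T)}[v,\vu,:,:]\neq 0}\!\!\! \phi\!\left(\vh^{(T)}_v,\tD_k^{(T)}[v,\vu,:,:]\right)\;+\;\big(n^k-\mathrm{nnz}_v\big)\,\phi\!\left(\vh^{(T)}_v,0\right),
\end{equation}
where $\mathrm{nnz}_v$ counts the non-zero slices at $v$. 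The first sum visits only non-zero entries, while the correction for the zero slices---which encode markings with no higher-order effect, i.e.\ $\vh_v^{\gS_\vu}=\vh^{(T)}_v$---costs $O(1)$ per node once $\mathrm{nnz}_v$ is known. Summing the expression in \eqref{eq:sparse_split} over all $v$ and invoking Proposition~\ref{prop:derivative_complexity}, the total number of non-zero slices is $O\!\left(n\cdot\min\{n^k,d^{k\cdot T}\}\right)$, which bounds both the time and the space of evaluating $\pool^{\text{node}}$; the sparse tensor and the per-node accumulators fit in the same budget. The bound for $\pool^{\text{out}}$ follows identically, using that $\tD_k^{\text{out}}$ inherits this sparsity from $\tD_k^{(T)}$ through the readout.

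\textbf{The main obstacle.} The complexity bookkeeping and the decomposition in \eqref{eq:sparse_split} are routine; the delicate point is the expressivity claim. Specifically, I expect the effort to concentrate on re-deriving the Taylor-reconstruction step of Theorem~\ref{thm:ourmethod_vs_osan} \emph{inside the restricted encoder class} defined by \eqref{eq:sparse_encoder}, and on verifying that the zero-derivative tuples are faithfully represented by the single baseline-count correction rather than silently discarded---so that the efficiency gain does not quietly weaken the simulation of $k$-OSAN.
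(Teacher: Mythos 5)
Your proof follows the same route as the paper's: a restricted, DeepSets-style instantiation of $\pool^{\text{node}}$ sitting inside the IGN encoder family, expressivity inherited from the Taylor-reconstruction argument behind Theorem~\ref{thm:ourmethod_vs_osan}, and complexity charged to the sparsity bound of Proposition~\ref{prop:derivative_complexity}. The complexity half is correct, and your explicit split of the sum into non-zero slices plus a $(n^k-\mathrm{nnz}_v)\cdot\phi\left(\vh_v^{(T)},0\right)$ correction is in fact more careful than the paper's one-line assertion that the DeepSet encoder ``can leverage the sparsity''; you also rightly observe that the zero slices must be counted rather than silently dropped, since they stand in for the tuples $\vu$ whose marking leaves $\vh_v$ essentially unchanged.

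There is, however, one concrete gap in the expressivity half for $k\ge 2$. Your $\phi$ sees a tuple $\vu$ only through the numerical slice $\tD_k^{(T)}[v,\vu,:,:]$ together with $\vh_v^{(T)}$. But the Taylor reconstruction of $\tH[v,\vu]=\mpnn(\gS_\vu)_v$ in the proof of Theorem~\ref{thm:ourmethod_vs_osan_formal} combines the derivatives with coefficients $\frac{1}{\valpha!}(\mY^{\vu})^{\valpha}$, where $\mY^{\vu}_{i,j}=\mathbf{1}[u_i=u_j]$; these coefficients depend on the equality pattern among the entries of $\vu$ (e.g.\ $(u,u)$ versus $(u_1,u_2)$ with $u_1\neq u_2$), and that pattern cannot be recovered from the derivative values alone. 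This is precisely why the paper's sparse encoder pairs each derivative entry with a canonical identifier $t(v,\vu,\valpha)$ of the index pattern. As written, your encoder would apply the wrong coefficients to tuples with repeated nodes and hence does not faithfully reproduce the $k$-OSAN aggregation over its full bag indexed by $V^k(\gG)$. The fix is cheap---append the pattern identifier, a discrete label from a set whose size depends only on $k$ and $m$, to $\phi$'s input---and it does not affect the complexity bound; for $k=1$ the issue vanishes and your argument is complete.
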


\looseness=-1
Propositions~\ref{prop:derivative_complexity} and~\ref{prop:encoder_complexity} suggest that a $k$-\ourmethod\ model with a base MPNN of depth $T$ has space complexity $O\left(n \cdot \min\{n^k,\, d^{k \cdot T}\}\right)$ and time complexity $O\left(d \cdot n \cdot \min\{n^k,\, d^{k \cdot (T-1)}\}\right)$. In comparison, $k$-OSAN has space complexity $O(n^{k+1})$ and time complexity $O(d \cdot n^{k+1})$, while $(k{+}1)$-IGN incurs both time and space complexity of $O(n^{k+1})$. Assuming the input graph is sparse (i.e., $d \ll n$), $k$-\ourmethod\ is more efficient then $k$-OSAN  and $(k+1)$-IGN when the base MPNN is shallow ($d^{T} < n$), while all three models have comparable complexity when the base MPNN is deep ($d^{T} > n$).

\section{Experiments}
\label{sec:experiments}

\looseness=-1
Our experimental study is designed to validate the theoretical arguments from the previous section and to address the following guiding questions: \textbf{(Q1)} How does \ourmethod perform on real-world datasets when compared against strong GNN baselines? \textbf{(Q2)} Can \ourmethod scale to larger graphs that are beyond the reach of Subgraph GNNs, and how does it perform in this regime? \textbf{(Q3)}  How does the expressive power of \ourmethod compare with natural and widely used GNN baselines? We evaluate \ourmethod across seven benchmarks, with additional experimental details provided in Appendix~\ref{app:experimental_details}.

% Our experimental study is designed to validate the theoretical arguments from the previous section and to address the following guiding questions: \textbf{(Q1)} How does the expressive power of \ourmethod compare with natural and widely used GNN baselines? \textbf{(Q2)} Can \ourmethod scale to larger graphs that are beyond the reach of Subgraph GNNs, and how does it perform in this regime? \textbf{(Q3)} How does \ourmethod perform on real-world datasets when compared against strong GNN baselines?

\textbf{Baselines.} We compare \ourmethod against strong representatives from three natural families of GNNs. First, motivated by its connection to positional/structural encodings (PSEs, Section~\ref{sec:theory}), we consider \textbf{encoding-augmented MPNNs}, including Laplacian PEs~\citep{dwivedi2023benchmarking}, RWSEs~\citep{dwivedi2021graph}, SignNet~\citep{lim2022sign}, random node identifiers~\citep{abboud2020surprising, sato2021random}, as well as recent methods such as GPSE~\citep{canturk_gpse} and MOSE~\citep{bao2024homomorphism}. Second, since \ourmethod is theoretically related to \textbf{Subgraph GNNs}, we compare with representative models like GNN-AK~\citep{zhao2022from}, SUN~\citep{frasca2022understanding}, and Subgraphormer~\citep{bar-shalom2023subgraphormer}. Because such models often struggle to scale, we also include sampling-based variants such as Policy-Learn~\citep{bevilacqua2023efficient}, HyMN~\citep{southern2025balancing}, and Subgraphormer with random sampling. Finally, we benchmark against widely used and modern \textbf{general-purpose GNNs}, including GIN~\citep{xu2018powerful}, GCN~\citep{kipf2016semi}, GatedGCN~\citep{bresson2017residual}, GPS~\citep{rampavsek2022recipe}, and GraphViT~\citep{he2023generalization}. Across experiments we include representatives from each family, while the specific choice of baselines in each task reflects relevance and standard practice in prior work.

\begin{table}[t]
\setlength{\tabcolsep}{1pt}
\centering
\footnotesize
\vspace{-40pt}
\caption{Performance on \textsc{OGB} and \textsc{ZINC} datasets (4 seeds). 
\one{First} and \two{second} best scores are highlighted. Scores sharing a color are not statistically distinguishable based on Welch’s t-test with a relaxed threshold of $p < 0.2$.  “–” denotes results not previously reported, and “x” indicates that digits beyond this point were not provided.}

\begin{tabular}{l | c c c c}
\toprule
\multirow{2}{*}{{Method $\downarrow$ / Dataset $\rightarrow$}} & \textsc{ZINC-12K } & \textsc{moltox21} & \textsc{molbace} & \textsc{molhiv} \\
& \textsc{(MAE $\downarrow$)} & \textsc{(ROC-AUC $\uparrow$)} & \textsc{(ROC-AUC $\uparrow$)} & \textsc{(ROC-AUC $\uparrow$)}  \\ 
\midrule
\textbf{Common Baselines} \\
$\,$ GCN~\citep{kipf2016semi}        & 0.321$\pm$0.009 & 75.29$\pm$0.69 & 79.15$\pm$1.44 & 76.06$\pm$0.97 \\
$\,$  GIN~\citep{xu2018powerful}      & 0.163$\pm$0.004 & 74.91$\pm$0.51 & 72.97$\pm$4.00 & 75.58$\pm$1.40 \\
$\,$ PNA~\citep{corso2020principal}  & 0.761$\pm$0.002 & 73.30$\pm$1.1x & --             & 79.05$\pm$1.32 \\
$\,$  GPS~\citep{rampavsek2022recipe} & 0.070$\pm$0.004 & 75.70$\pm$0.40 & --             & 78.80$\pm$1.01 \\
$\,$ GraphViT \citep{he2023generalization}                        & 0.085$\pm$0.005 & \one{78.51$\pm$0.77} & --             & 77.92$\pm$1.49 \\
\midrule
\textbf{Subgraph GNNs} \\
$\,$ Reconstr. GNN~\citep{cotta2021reconstruction} & --             & 75.15$\pm$1.40 & --             & 76.32$\pm$1.40 \\
$\,$ GNN-AK+~\citep{zhao2022from}                  & 0.091$\pm$0.011 & --            & --             & \two{79.61$\pm$1.19} \\
$\,$ SUN (EGO+)~\citep{frasca2022understanding}    & 0.084$\pm$0.002 & --            & --             & \two{80.03$\pm$0.55} \\
$\,$ Full  \citep{bevilacqua2023efficient}                                        & 0.087$\pm$0.003 & 76.25$\pm$1.12 & 78.41$\pm$1.94 & 76.54$\pm$1.37 \\
$\,$ OSAN~\citep{qian2022ordered}                  & 0.177$\pm$0.016 & --            & 72.30$\pm$6.60 & -- \\
$\,$ Random \citep{bevilacqua2023efficient}                                        & 0.102$\pm$0.003 & 76.62$\pm$0.63 & 78.14$\pm$2.36 & 77.30$\pm$2.56 \\
$\,$ Policy-Learn \citep{bevilacqua2023efficient}                                  & 0.097$\pm$0.005 & \two{77.36$\pm$0.60} & 78.39$\pm$2.28 & 78.49$\pm$1.01 \\
% $\,$ Subgraphormer \citep{bar-shalom2024subgraphormer} & 0.067±0.007 & --  & \two{81.62$\pm$3.55}  & 80.38$\pm$1.92   \\
$\,$ Subgraphormer \citep{bar-shalom2024subgraphormer} & \one{0.063±0.001} & --  & \one{84.35$\pm$0.65}  & 79.58$\pm$0.35   \\
$\,$ HyMN \citep{southern2025balancing}                                          & 0.080$\pm$0.003 & \two{77.82$\pm$0.59} & \two{81.16$\pm$1.21} & \one{81.01$\pm$1.17} \\
\midrule
\textbf{PSEs} \\
$\,$ GIN + Laplacian PE~\citep{dwivedi2023benchmarking}  & 0.162$\pm$0.014 & 76.60$\pm$0.3x & 80.40$\pm$1.5x & 75.60$\pm$1.1x \\
$\,$ GIN + RWSE~\citep{dwivedi2021graph}                 & 0.128$\pm$0.005 & 76.30$\pm$0.5x & 79.60$\pm$2.8x & 78.10$\pm$1.5x \\
$\,$ SignNet~\citep{lim2022sign}                   & 0.102$\pm$0.002 & --             & --             & -- \\
$\,$ RNI~\citep{abboud2020surprising}              & 0.136±0.0070 & --             & 61.94$\pm$2.51 & 77.74±0.98 \\
$\,$ GSN~\citep{bouritsas2022improving} & 0.101$\pm$0.010 & 76.08$\pm$0.79 & 77.40$\pm$2.92 & \one{80.39$\pm$0.90} \\

$\,$ ENGNN~\citep{wang2025using}                   & 0.114$\pm$0.005 & --             & --             & 78.51$\pm$0.86 \\
$\,$ GPSE~\citep{canturk_gpse}                          & \two{0.065$\pm$0.003} & \two{77.40$\pm$0.8x} & \two{80.80$\pm$3.1x} & 78.15$\pm$1.33 \\
$\,$ MOSE \citep{bao2024homomorphism} & \one{0.062$\pm$0.002} & -- & -- & -- \\ 
\midrule
\textbf{Ours} \\
$\,$ \ourmethod                                    & \two{0.0666$\pm$0.0035} & \one{77.99$\pm$0.71} & \two{82.10$\pm$1.45} & \one{80.86$\pm$0.52} \\
\bottomrule
% \vspace{-10pt}
\end{tabular}
%}
\label{tab:ogb-full}
\end{table}

\looseness=-1
\textbf{OGB and ZINC.}  
To evaluate HOD-GNN’s real-world performance \textbf{(Q1)}, we benchmark it on standard graph property prediction datasets: ZINC~\citep{irwin2012zinc} for regression, and three molecular classification tasks from the OGB suite~\citep{hu2020open}—\texttt{molhiv}, \texttt{molbace}, and \texttt{moltox21}. These benchmarks provide standardized splits and are the de facto choice for assessing GNN performance. As shown in Table~\ref{tab:ogb-full}, \ourmethod delivers consistently excellent results across all tasks, standing out as the only model that consistently ranks within the top two tiers.

\textbf{Peptides.}  
Section~\ref{sec:theory} established that \ourmethod has improved computational complexity compared to Subgraph GNNs. To demonstrate its scalability in practice \textbf{(Q2)} and to further assess its performance on real-world data \textbf{(Q1)}, we evaluate \ourmethod on the \texttt{Peptides} datasets from the LRGB benchmark~\citep{dwivedi2022long}, where the goal is to predict global structural and functional properties of peptides represented as graphs. As stated in prior work~\citep{southern2025balancing, bar-shalom2023subgraphormer}, full-bag Subgraph GNNs cannot process these graphs directly using standard hardware, requiring the use of subsampling strategies that can reduce expressivity and introduce optimization challenges due to randomness.  In contrast, \ourmethod handles these graphs directly without subsampling.

\begin{wraptable}{r}{0.43\textwidth}
  \centering
  % \vspace{-10pt} % optional vertical \setlength{\tabcolsep}{3pt}
    \setlength{\tabcolsep}{3pt}

    \centering

    \caption{Performance on \textsc{Peptides} (4 seeds). \one{First} and \two{second} best scores are highlighted. Same color scores are not statistically distinguishable based on Welch’s t-test with a relaxed threshold of $p < 0.2$.}

    \label{tab:results_lrgb_complete}
    \scriptsize
    \begin{tabular}{@{}lcc@{}}
    \hline\toprule
    \multirow{2}{*}{\textbf{Model}} & \textbf{Peptides-func}  & \textbf{Peptides-struct}              
    \\
    % & \textbf{func} & \textbf{struct} %& \textbf{voc-sp}
    % \\
    & \scriptsize{AP $\uparrow$} & \scriptsize{MAE $\downarrow$} %& \scriptsize{F1 $\uparrow$}                             
    \\ \midrule  
    \textbf{Common Baselines} \\
    $\,$ GCN                        & 59.30${\pm0.23}$ & 0.3496${\pm0.0013}$ \\ %& 0.1268${\pm0.0060}$\\
    $\,$ GINE                       & 54.98${\pm0.79}$ & 0.3547${\pm0.0045}$ \\ %& 0.1265${\pm0.0076}$\\
    $\,$ GCNII                      & 55.43${\pm0.78}$ & 0.3471${\pm0.0010}$ \\ %& 0.1698${\pm0.0080}$\\
    $\,$ GatedGCN                   & 58.64${\pm0.77}$ & 0.3420${\pm0.0013}$ \\     $\,$ DIGL+MPNN+LapPE     & 68.30${\pm0.26}$         & 0.2616${\pm0.0018}$ \\ % & 0.2921${\pm0.0038}$\\
    $\,$ MixHop-GCN+LapPE    & 68.43${\pm0.49}$         & 0.2614${\pm0.0023}$ \\ % & 0.2218${\pm0.0174}$\\ 
    $\,$ DRew-GCN+LapPE             & \one{71.50${\pm0.44}$}   & 0.2536${\pm0.0015}$ \\ % & 0.1851${\pm0.0092}$\\    
    $\,$ SAN+LapPE         & 63.84${\pm1.21}$ & 0.2683${\pm0.0043}$           \\ % & \three{0.3230$_{\pm0.0039}$}\\
    $\,$ GraphGPS+LapPE    & 65.35${\pm0.41}$ & 0.2500${\pm0.0005}$   \\ % & 
     $\,$ Exphormer & 65.27${\pm0.43}$ & 0.2481${\pm0.0007}$ \\ 
     $\,$ GraphViT  & \two{69.19${\pm0.85}$} & 0.2474${\pm0.0016}$ \\
    \midrule
\textbf{Subgraph GNNs} \\
%$\,$ Reconstr. GNN~\citep{cotta2021reconstruction} &          -- & --     \\
%$\,$ GNN-AK+~\citep{zhao2022from}         &          -- & --            \\
%$\,$ SUN (EGO+)~\citep{frasca2022understanding}     &          -- & --   \\
%$\,$ Full  \citep{bevilacqua2023efficient} &          -- & --                                         \\
%$\,$ OSAN~\citep{qian2022ordered}  &          -- & --    \\
%$\,$ Random                          &          -- & --                 \\
$\,$ Policy-Learn  &        64.59${\pm0.18}$ & 	\two{0.2475${\pm0.0011}$ }                         \\
$\,$ Subgraphormer $30\%$& 64.15${\pm0.52}$  &  0.2494${\pm0.0020}$ \\
$\,$ HyMN  &  68.57${\pm0.55}$           &  \two{0.2464${\pm0.0013}$}\\
\midrule
\textbf{PSEs} \\
$\,$ GCN + Laplacian PE  & 62.18${\pm0.55}$ & 0.2492${\pm0.0019}$  \\
$\,$ GCN + RWSE  & 60.67${\pm0.69}$   & 0.2574${\pm0.0020 }$              \\
$\,$ SignNet &  63.14${\pm0.59}$ & --                   \\
%$\,$ RNI~\citep{abboud2020surprising}       &   --     & -- \\
%$\,$ ENGNN~\citep{wang2025using}              & --     & -- \\
$\,$ GPSE + GCN  &  63.16${\pm0.85}$      &   \two{0.2487${\pm0.0011}$ }                 \\
$\,$ GPSE  + GPS &  66.88${\pm1.51}$  & \one{0.2464${\pm0.0025}$ }  \\
$\,$ MOSE  &  63.5x${\pm1.1x}$  & 0.318${\pm0.010}$
 \\ 
 \midrule

    \textbf{Ours} \\
    $\,$  \ourmethod & \two{69.68${\pm0.56}$ } & \one{0.2450${\pm0.0011}$ } \\

    \bottomrule
    \end{tabular}
  \vspace{-10pt} % optional space below
\end{wraptable}

As shown in Table~\ref{tab:results_lrgb_complete}, \ourmethod surpasses all sampling-based Subgraph GNNs and is the only model that consistently ranks within the top two tiers, underscoring both its scalability and effectiveness on challenging real-world molecular tasks.

\looseness=-1
\textbf{Synthetic substructure counting.}  
Counting small substructures is a well-established proxy for evaluating the expressive power of GNNs~\citep{bouritsas2022improving, arvind2020weisfeiler}. In Section~\ref{sec:theory}, we prove that \ourmethod matches the expressivity of certain Subgraph GNNs and is strictly more powerful than MPNNs augmented with RWSEs. To empirically validate these results \textbf{(Q3)}, we evaluate the ability of \ourmethod to learn to count various local substructures, following the experimental protocol of \citet{huang2022boosting}. Table~\ref{tab:subgraph_counting} in Appendix \ref{app:additional_experiments} shows that \ourmethod achieves performance on par with Subgraph GNN baselines, while clearly surpassing MPNN+RWSE, confirming its strong practical expressivity.

\looseness=-1
\textbf{Ablations.} Appendix~\ref{app:additional_experiments} presents additional ablation studies. We first examine the effect of the activation function used by \ourmethod being analytic. In line with our theory, analytic activations consistently unlock higher expressive power, yielding the lowest MAE on substructure counting tasks. Non-analytic activations also enrich expressivity, substantially surpassing encoding augmented MPNNs. We then analyze the influence of the maximum derivative order used in \ourmethod on expressive power, observing steady improvements up to order~4, beyond which performance remains stable. Additional experiments compare runtime and memory against subgraph-based GNNs under different subgraph selection policies, showing that \ourmethod~achieves superior performance on both fronts. Finally, we evaluate the choice of backbone MPNN on real-world datasets, and observe consistently strong performance across GCN, GIN, and GPS.

 \looseness=-1
\textbf{Summary.}  
Across all experiments, we find consistent evidence supporting the guiding questions outlined above. \textbf{(A1)} Across the ZINC, OGB, and Peptides datasets, \ourmethod is the only architecture that consistently ranks within the top two tiers. \textbf{(A2)}, On the challenging \texttt{Peptides} datasets, \ourmethod scales to larger graphs that full-bag Subgraph GNNs cannot handle, while maintaining strong predictive performance. \textbf{(A3)}  On synthetic substructure counting, \ourmethod matches the expressivity of Subgraph GNNs while surpassing encoding-augmented MPNNs, showing that \ourmethod is \textbf{as expressive yet more scalable} than subgraph GNNs. Together, these findings establish \ourmethod as a scalable, expressive, and broadly effective GNN framework across synthetic, molecular, and large-scale real-world tasks.

\section{Conclusion}
\label{sec:conclusion}

\looseness=-1
We introduce \ourmethod, a GNN that enhances the expressivity of a base MPNN by leveraging its high-order derivatives. We provide a theoretical analysis of \ourmethod's expressive power, connecting it to the $k$-OSAN framework and RWSE encodings, and show that it can offer better scalability than comparable expressive models. Empirically, \ourmethod\ has strong performance across a range of benchmarks, matching or surpassing  encoding based methods, Subgraph GNNs, and other common baselines. Notably, \ourmethod\ scales to larger graphs than full-bag subgraph GNNs.

% and remains effective with deep base MPNNs and small hidden dimensions.

\looseness=-1
\textbf{Limitations and Future Work.}
First, while \ourmethod\ has favorable theoretical complexity (Section~\ref{sec:theory}), its practical efficiency relies on sparse matrix operations. A more efficient implementation of these operators has the potential to greatly enhance the scalability of \ourmethod. Secondly, the connection between MPNN derivatives and oversquashing or oversmoothing, alongside \ourmethod's strong performance with deep base MPNNs and small hidden dimensions, suggests a deeper link not fully addressed in this work. Exploring this connection is a promising direction for future research.

\section{Acknowledgments}
Y.E. is supported by the Zeff PhD Fellowship. Y.G. is supported by the the UKRI Engineering and Physical Sciences Research Council (EPSRC)
CDT in Autonomous and Intelligent Machines and Systems (grant reference EP/S024050/1). G.B. is supported by the Jacobs Qualcomm PhD Fellowship. F.F. conducted this work supported by an Aly Kaufman Post-Doctoral Fellowship. H.M. is the Robert J. Shillman Fellow, and is supported by the Israel Science Foundation through a personal grant (ISF 264/23) and an equipment grant (ISF 532/23).

\bibliographystyle{plainnat}   
\bibliography{references}

\appendix
\newpage

\section{Previous Work}
\label{apx:prev_work}
\textbf{Expressive Power and Hierarchies in GNNs.}  
The expressive power of GNNs is often measured by their ability to distinguish non-isomorphic graphs. Foundational results~\citep{morris2019weisfeiler, xu2018powerful} show that standard MPNNs are bounded by the 1-Weisfeiler-Lehman (1-WL) test~\citep{weisfeiler1968reduction}, motivating the development of more expressive architectures, see \citep{sato2020survey,morris2023weisfeiler, jegelka2022theory, li2022expressive,zhang2024expressive-b}
 for comprehensive surveys. For instance, \citet{morris2019weisfeiler} and \citet{maron2018invariant} introduced GNN hierarchies matching the expressivity of the $k$-WL test at a computational cost of $\mathcal{O}(n^k)$ in both time and memory. Other approaches include equivariant polynomial-based models~\citep{maron2019provably, puny2023equivariant}, Subgraph GNNs~\citep{ zhang2021nested, cotta2021reconstruction, bevilacqua2021equivariant, frasca2022understanding,zhang2023rethinking,zhang2023complete,bar2024flexible} topologically enhanced GNNs \citep{rieck2019persistent,bodnar2021weisfeiler, eitan2024topological} and more. A complementary line of work improves expressivity by enriching node features with informative structural descriptors, such as substructure and homomorphism counts \citep{bouritsas2022improving, bao2024homomorphism}, random node features \citep{abboud2020surprising, sato2021random, eliasof2023graph}, or spectral methods \citep{dwivedi2023benchmarking, lim2022sign}.

\textbf{Derivatives of GNNs.} 
Derivatives frequently appear in the analysis of GNNs. A prominent example is the study of \textit{oversquashing}-the failure of information to propagate through graph structures~\citep{alon2020bottleneck, topping2021understanding, di2023over, didoes}. A central tool in analyzing oversquashing is the use of derivatives: specifically, the gradients of final node representations with respect to initial features~\citep[see][]{di2023over}, and mixed output derivatives with respect to pairs of input nodes~\citep[see][]{didoes}. For a comprehensive overview of oversquashing, see ~\citet{akansha2023over}. Node derivatives also play a key role in GNN explainability~\citep{ying2019gnnexplainer,luo2020parameterized,baldassarre2019explainability,pope2019explainability}. Gradient-based approaches such as Sensitivity Analysis, Guided Backpropagation~\citep{baldassarre2019explainability}, and Grad-CAM~\citep{pope2019explainability} rely on derivative magnitudes to compute importance scores. Finally, several standalone works make use of node-based derivatives. For instance, \citet{arroyo2025vanishing} use node derivatives to draw a connection between vanishing gradients, and over-smoothing. In a different direction, \citet{keren2024sequential} propose new aggregation functions for MPNNs, designed specifically to induce non-zero mixed node derivatives.

\textbf{Learning over derivative input.}
Beyond GNNs, several recent works have explored learning directly from derivative-based inputs. \citet{xu2022signal} propose a framework that processes spatial derivatives of implicit neural representations (INRs) to modify them without explicit decoding. \citet{mitchell2021fast} introduce a learned method for fact editing in LLMs using their gradients. \citet{gelberggradmetanet} present a general architecture for learning over sets of gradients, with applications in meta-optimization, domain adaptation, and curvature estimation.

\section{Motivation}
\label{apx:motivation}
\looseness=-1
Beyond being a widely used and informative quantity in GNN analysis, MPNN derivatives can enhance expressivity. We now provide an intuition for why this occurs by drawing a connection to Subgraph GNNs.

Consider a DS-GNN model $\osan$ composed of a base MPNN $\mpnn$ followed by a downstream MPNN $\gnn$. We assume for simplicity that $\mpnn$ outputs graph-level scalars, and that the activation function $\sigma$ used in $\mpnn$ is \textit{analytic} with infinite convergence radius\footnote{Many commonly used functions, including $\sin$, $\cos$, and $\exp$, are analytic with infinite convergence radius. See Appendix~\ref{apx:proofs} for a discussion of the case where $\sigma$ is not analytic.}. That is, for every $x \in \sR$, $\sigma$ satisfies:
\begin{equation}
    \sigma(x) = \sum_{\alpha=0}^{\infty} \frac{\sigma^{(\alpha)}(0)}{\alpha!} x^\alpha.
\end{equation}
For each node $v$ of a given input graph $\gG$, we define a function $f_v: \mathbb{R} \to \mathbb{R}$ by:
\begin{equation}
    f_v(x) := \mpnn(\mA, \mX \oplus x\cdot \ve_v),
\end{equation}
which corresponds to the output of $\mpnn$ obtained by scaling the node marking feature at node $v$ by $x$. Observe that $f_v$ is analytic\footnote{For a definition of multi-dimensional analytic functions see Appendix \ref{apx:definitions}.} since $\mpnn$ is composed of analytic functions, and that by definition, $f_v(1) = \vh_v^{\text{sub}}$, i.e., the representation of the graph augmented with a mark for the node $v$. Also observe that by expanding $f_v$ around $x = 0$, we can approximate $\vh_v^{\text{sub}}$ to any desired precision using a \textbf{finite number} of derivatives, up to order $m$:
\begin{equation}
   f_v(1) \approx \sum_{\alpha=0}^{m} \frac{f_v^{(\alpha)}(0)}{\alpha!}. 
\end{equation}
Moreover, each derivative $f_v^{(\alpha)}(0)$ corresponds to a partial derivative of the output of $\mpnn$ with respect to the $v$-th coordinate of the augmented input:
\begin{equation}
    f_v^{(\alpha)}(0) = \frac{\partial^\alpha \mpnn(\mA, \mX \oplus x\cdot \ve_v)}{\partial^\alpha x}\Big|_{x=0} = \frac{\partial^\alpha \vh^{\text{out}}}{\partial^\alpha \ve_v}.
\end{equation}
This suggests that by using an encoder network to extract node features from the first $m$ derivatives of $\mpnn$ with respect to each node feature, we can effectively reconstruct $\vh_v^{\text{sub}}$. By passing these derivative-based features to the downstream GNN $\gnn$, we can approximate the behavior of $\osan$, and therefore be at least as expressive. 

\section{k-\ourmethod}
\label{apx:method}

In this section, we elaborate on the $k$-\ourmethod\ architecture, discussed in Section \ref{sec:higher_order_ourmethod}. Similar to  $1$-\ourmethod, a $k$-\ourmethod\ model, denoted $\ourgnn$, consists of a base MPNN $\mpnn$, two derivative encoders $\pool^{\text{node}}, \pool^{\text{out}}$, and a  downstream network $\gnn$. Given an input graph $\gG = (\mA, \mX)$, the computation of $\ourgnn(\gG)$ proceeds in four stages: (1) compute the output and final node representation of the input graph using $\mpnn$; (2) compute the k-indexed derivative tensors $\tD^{(T)}_k, \tD^{\text{out}}_k $; (3) extract new derivative informed node features from the derivative tensor; (4) Use these new node features for downstream processing. 

\textbf{Steps 1 \& 2.} In the first two stages, , we compute the final node representations $\vh^{(T)}$ and the output vector $\vh^{\text{out}}$ using the base MPNN $\mpnn$, along with their corresponding derivative tensors defined below (the $k$-indexed derivative tensor was defined in Section \ref{sec:higher_order_ourmethod} for he case where the input features are 1-dimensional).

\begin{definition}
    Given a graph $\gG=(\mA, \mX_0)$ with $n$ nodes, $\mX_0 \in \sR^{n\times d}$, an MPNN $\mpnn$ and some intermediate node feature representation matrix $\vh \in \sR^{n \times d'}$, the $k$-indexed derivative tensor of $\tD_k(\vh) \in \sR^{n \times n^{k}\times d'  \times m^{d \times k}} $
    is defined by:
    \begin{equation}
         \tD_k(\vh)[v, \vu, i, \valpha] = \partial^\valpha \vh_{v,i}(\mX_0) =  \frac{\partial^{|\valpha|} \vh_{v,i}}{\prod_{(j_1, j_2) \in [d]\times[k]}\partial \mX^{\valpha_{j_1, j_2}}_{u_1,j_1}, }\Big|_{\mX = \mX_0}.
    \end{equation}
    where $v \in V(\gG)$, $\vu=(u_1, \dots, u_k) \in V^k(\gG)$ , $i \in [d'], \valpha = (\valpha_{j_1, j_2})_{d,k} \in \{0, \dots, m-1 \}^{d \times k}$ and $|\valpha| = \sum \valpha_{j_1,j_2} $.

    Similarly, given a graph-level prediction vector $\vh^\text{out} \in \sR^{d'}$ the derivative tensor $\tD_k(\vh^\text{out}) \in \sR^{ n^{k}\times d' \times d^k \times m^k} $ is defined by:
    
    \begin{equation}
         \tD_k(\vh)[\vu, i, \valpha] = \partial^\valpha \vh_{i}(\mX_0).
    \end{equation}
\end{definition}

The derivative tensors $\tD^{(T)}_k = \tD(\vh^{(T)})_k$ and $\tD^{\text{out}}_k = \tD(\vh^\text{out})_k$ are computed using a message-passing-like procedure detailed in Appendix~\ref{alg:derivative_computation}, which enables both efficient computation and allows us to backpropagate through it, supporting end-to-end training of $k$-\ourmethod.

\textbf{Step 3.}  
In the third stage of our method, we extract new node features informed by the derivative tensors $ \tD^{(T)}_k, \tD^\text{out}_k,$ using the encoder networks  $\pool^{\text{node}}:   \sR^{n \times n^{k}\times d'  \times m^{d \times k}}  \to \sR^{n \times d''}$ and $\pool^{\text{out}}: \sR^{ n^{k}\times d' \times m^{d \times k}}  \to \sR^{n \times d''}$. As $\tD^{(T)}_k$ is a tensor indexed by $(k+1)$ nodes and $\tD^\text{out}_k$ is a tensor indexed by $k$ nodes, natural choices for $\pool^{\text{node}}$ and $\pool^{\text{out}}$ are $(k+1)$-IGN and a $k$-IGN \citep{maron2018invariant} designed specifically to process such data. 

Furthermore, Proposition~\ref{prop:derivative_complexity} shows that for sparse graphs or relatively shallow base MPNNs, the derivative tensor $\tD^{(T)}_k$ itself becomes sparse, with space complexity $O\left(n \cdot \min \{ n^{k}, d^{k\cdot T} \} \right)$, where $d$ is the maximum degree of the input graph. This motivates the choice of a node encoder $\pool^{\text{node}}$ that preserves this sparsity structure and exploits it for improved computational efficiency.

To enable this, we can define the encoder $\pool^{\text{node}}$ as a subclass of the general $(k{+}1)$-IGN architecture, implemented as a DeepSet~\citep{zaheer2017deep} operating independently on the derivative entries associated with each node feature $\vh^{(T)}_v$. Specifically, we define:

% To enable this,we define the encoder $\pool^{\text{node}}$ as a subclass of the general $(k{+}1)$-IGN architecture, implemented as a DeepSet~\citep{zaheer2017deep}

% which simply acts as a deep-set \citep{zaheer2017deep} architecture on the derivatives operating independently on the derivative entries associated with each node feature $\vh^{(T)}_v$. 

\begin{equation}
\label{eq:sparse_encoder}
    \pool^{\text{ds-node}}(\tD^{(T)}_k)_v = \text{DeepSet}(\{ \big( \tD^{(T)}_k[v, \vu,i, \valpha], t(v, \vu, \valpha) \big) \mid  \vu \in V^k(\gG), i \in [d'], \valpha \in [m]^{k\times d} \}).
\end{equation}

Here we assume nodes are given in index form, that is $v \in \{1, \dots, n\}$., $\vu \in \{1, \dots, n\}^k$, and $t(v, \vu, \valpha)$ is a function that encodes the derivative pattern associated with the index tuple $(v, \vu, \valpha)$, that is
\begin{equation}
t(v_1, \vu_1, \valpha_1) = t(v_2, \vu_2, \valpha_2)
\quad \Leftrightarrow \quad
\valpha_1 = \valpha_2 \text{ and } \exists \sigma \in S_n \text{ such that } v_1 = \sigma(v_2),\; \vu_1 = \sigma(\vu_2),
\end{equation}
where $S_n$ denotes the symmetric group on $n$ elements. In other words, $t$ maps each derivative index tuple to a \emph{canonical identifier} that is invariant under permutations of the node indices but sensitive to the derivative multi-index.

% that encodes the which satisfies $t(i_1, \vj_1, \valpha_1)= t(i_2, \vj_2, \valpha_2)$ if and only if $\valpha_1 = \valpha_2$, and there exists a permutation map $\sigma \in S_n$ such that $i_1 = \sigma(i_2)$, $j_1 = \sigma(j_2)$. That is $t$ encodes the derivative pattern of the index tuple $(i_1, \vj_1, \valpha_1)$.
% \begin{equation}
     
% \end{equation}

This design improves both space and time complexity, as the sets over which the DeepSet operates are typically sparse. The proof of Theorem \ref{thm:ourmethod_vs_osan} provided in appendix \ref{apx:proofs} shows that this encoder architecture is enough to be as expressive as $k$-OSAN.

Finally, we proceed the same way as 1-\ourmethod\, constructing the derivative-informed node features $\vh^{\text{der}}$ by combining information from the base MPNN, the pooled intermediate derivatives, and the output derivatives:
\begin{equation}
% \label{eq:derivative_informed_node_features}
    \vh^{\text{der}}_v =  \vh^{(T)}_v \oplus \pool^{\text{out}}(\tD^{\text{out}}_k) \oplus \pool^{\text{node}}(\tD^{\text{(T)}}_k).
\end{equation}

\textbf{Step 4.}  
This step is identical to that of 1-\ourmethod\ described in Section \ref{sec:1-ourmethod}.

\section{Derivative Computation}
\label{apx:derivative_comutation}
We now extend the derivative computation algorithm presented in Section \ref{sec:gradient_computation}, to account for $k$-mixed derivatives as well as more general MPNN architectures. Similarly to Section \ref{sec:gradient_computation}, we first split the node update procedure of an MPNN into two parts: an aggregation step:
\begin{equation}
\label{eq:mpnn_agg}
\tilde{\vh}^{(t-1)}_v = \vh^{(t-1)}_v  \oplus \agg^{(t)}\left(\left\{\vh_u^{(t-1)} : u \in \gN(v)\right\}\right),
\end{equation}

and a DeepSet step:
\begin{equation}
\label{eq:mpnn_deepset}
    \vh^{(t)}_v = \mlp(\tilde{\vh}^{(t-1)}_v).
\end{equation}

Our algorithm begins by computing the initial derivative tensor $\tD^{(0)}$ (we abuse notation and ommit the subscript $k$), and then recursively constructs each $\tilde{\tD}^{(t-1)} = \tD(\tilde{\vh}^{(t-1)})$   from $\tD^{(t-1)}$  and then $\tD^{(t)}$ from $\tilde{\tD}^{(t-1)}$. Finally, the output derivative tensor $\tD^{\text{out}}$ is obtained from $\tD^{(T)}$. See Algorithm~\ref{alg:derivative_computation} for the full procedure.

To analyze the time and memory complexity of each step, we define the derivative sparsity of a node $v$ of an input graph $\gG$ at layer $t$ of the base MPNN, as the number of non-zero derivatives corresponding to $\vh^{(t)}_v$. That is 
\begin{equation}
    s_{v,t} = \left|\left\{(\vu, i,  \valpha)\;\middle|\; \tD^{(t)}[v, \vu,i, \valpha] \neq 0 \right\}\right|.
\end{equation}
% The overall graph derivative sparsity at layer $t$ is
\begin{equation}
    s_t = \min_{v \in V(\gG)} s_{v,t}.
\end{equation}
When $s_t$ is small, the tensor $\tD^{(t)}$ can be stored efficiently in memory using sparse representations, requiring only $O(n \cdot s_t)$ space. The quantities $s_{v,t}$ and $s_t$ are leveraged in Appendix~\ref{apx:proofs} to derive concrete asymptotic bounds for the complexity of Algorithm~\ref{alg:derivative_computation}.

\textbf{Computing $\tD^{(0)}$.}  
Since $\vh^{(0)} = \mX$, the derivatives are straightforward to compute:
\begin{equation}
\label{eq:derivative_init_higher_order}
    \tD^{(0)}[v, \vu, i,  \valpha] =
    \begin{cases}
        1 & \text{if } \exists s \text{ s.t. }  v = \vu_s, \valpha_{s,i} = 1,  \sum_{s' \neq s, j \in [d]}\valpha_{s',j} = 0\\
        0 & \text{otherwise}.
    \end{cases}
\end{equation}
% \begin{equation}
% \label{eq:derivative_init_higher_order}
%     \tD^{(0)}[v, \vu, i, \vj, \valpha] =
%     \begin{cases}
%         1 & \text{if } \exists s \text{ s.t. }  v = \vu_s, i = \vj_s, \valpha_s = 1,  \sum_{s' \neq s}\valpha_{s'} = 0\\
%         0 & \text{otherwise}.
%     \end{cases}
% \end{equation}
% \begin{equation}
%     \tD^{(0)}[v, \vu, i, \vj, \valpha] =
%     \begin{cases}
%         1 & \text{if } v = u_1 = \cdots = u_k,  \text{ and } \sum \alpha_i = 1, \\
%         0 & \text{otherwise}.
%     \end{cases}
% \end{equation}
Although $\tD^{(0)}$ is high-dimensional, it is extremely sparse with $s_{0} = O(1)$. 

\textbf{Computing $\tilde{\tD}^{(t-1)}$ from $\tD^{(t-1)}$.}  
Abusing notation, and assuming that for every node $v$ in the input graph, both $\vh^{(t-1)}_v$ and $\vh^{(t-1), \text{agg}}_v = \agg^{(t)}\left(\{\vh_u^{(t-1)} : u \in \gN(v)\} \right)$ lie in $\sR^{d'}$, we observe that since $\tilde{\vh}^{(t-1)}_{v, 0{:}d'-1} = \vh^{(t-1)}_v$, the derivatives of the first $d'$ coordinates of $\tilde{\vh}$ are precisely $\tD^{(t-1)}$. In contrast, the derivatives with respect to the last $d'$ coordinates depend heavily on the choice of aggregation function $\agg$. 

However, when the aggregation function is linear, i.e., of the form
\begin{equation}
\label{eq:linear_agg}
    \agg^{(t)}\left(\{\vh_u^{(t-1)} : u \in \gN(v)\}\right) = \sum_{v' \in \gN(v)} b_{v',v} \vh^{(t-1)}_{v'},
\end{equation}
for some coefficients $b_{v',v}$ that depend only on the adjacency matrix $\mA$ of the input graph (this is the case in most widely used MPNNs), the computation of derivatives simplifies. Since differentiation commutes with linear operations, the computation in Equation \ref{eq:linear_agg} carries over, yielding
\begin{equation}
\label{eq:derivative_linear_agg}
    \tD(\vh^{(t-1), \text{agg}})[v, \dots] = \sum_{v' \in \gN(v)} b_{v',v} \tD^{(t-1)}[v', \dots].
\end{equation}

Aggregating the derivatives of neighboring nodes mirrors the structure of message passing, which endows it with several beneficial properties. First, using Equation~\ref{eq:derivative_linear_agg}, the tensor $\tilde{\tD}^{(t-1)}[v, \dots]$ can be computed in time $O(d\cdot s_{ t-1})$ where $d$ is the maximal degree of the input graph\footnote{We slightly abuse notation by using $d$ to denote both the input feature dimension and other quantities; the meaning should be clear from context.}: for each node, we aggregate $d$ neighbor derivative vectors, each containing at most $s_{t-1}$ non-zero entries. This means that the total computation time of this step is $O(d\cdot n \cdot  s_{t-1})$.

The argument above also implies that the total number of non-zero elements in $\tilde{\tD}^{(t-1)}[v, \dots]$ is bounded by $O\left( \min\{d\cdot s_{t-1},\; n^k\}\right)$. Thus, the above update leverages the sparsity of the graph to achieve efficiency in both space and time complexity.

% To show this, we first define the derivative sparsity of a node $v$ at layer $t$ as
% \begin{equation}
%     s_{v,t} = \left|\left\{(\vu, i, \vj, \valpha)\;\middle|\; \tD^{(t)}[v, \vu,i, \vj, \valpha] \neq 0 \right\}\right|,
% \end{equation}
% and define the overall graph derivative sparsity at layer $t$ as
% \begin{equation}
%     s_t = \min_{v \in V(\gG)} s_{v,t}.
% \end{equation}

% Using Equation~\ref{eq:derivative_linear_agg}, the tensor $\tilde{\tD}^{(t-1)}$ can be computed in time $O(dns_{t-1})$: for each of the $n$ nodes, we aggregate $d$ neighbor derivative vectors, each containing at most $s_v^{t-1}$ non-zero entries. Moreover, the total number of non-zero elements in $\tilde{\tD}^{(t-1)}$ is bounded by $O\left(n \cdot \min\{ds_{t-1},\; n^k\}\right)$. Thus, the above update leverages the sparsity of the graph to achieve efficiency in both space and time complexity.

% Recalling that the initial derivative tensor $\tD^{(0)}$ has $O(n)$ non-zero entries, th

% Assuming that the maximal number of nonzeo entries for any derivative t

\textbf{Computing $\tD^{(t)}$ from $\tilde{\tD}^{(t-1)}$.}

We begin by assuming that the MLP in Equation~\ref{eq:mpnn_deepset} has depth $1$, i.e.,
\begin{equation}
    \mlp^{(t)}(\vx) = \sigma(\mW^{(t)} \cdot \vx +\vb^{(t)}).
\end{equation}
In the general case where the MLP has depth $l$, the procedure described below is applied recursively $l$ times.

We define the intermediate linear activation as
\begin{equation}
\label{eq:learned_linear}
    \vh^{(t-1), \text{Lin}}_v = \mW^{(t)} \cdot \tilde{\vh}^{(t)}_v + \vb^{(t)},
\end{equation}
and describe how to compute $\tD(\vh^{(t-1), \text{Lin}})$ from $\tilde{\tD}^{(t-1)}$, followed by the computation of $\tD^{(t)}$ from $\tD(\vh^{(t-1), \text{Lin}})$.

First, since the update in Equation~\ref{eq:learned_linear} is affine, we can drop the bias term and get:
\begin{equation}
\label{eq:agg_to_linear_deriv_update}
    \tD(\vh^{(t-1), \text{Lin}})[v, \vu, i,  \alpha] = \sum_{i'} \mW^{(t)}_{i,i'} \cdot \tilde{\tD}^{(t-1)}[v, \vu, i',  \alpha].
\end{equation}

Second, notice that
\begin{equation}
\label{eq:deepset_in_deriv_comp}
    \vh^{(t)}_v = \sigma(\vh^{(t-1), \text{Lin}}_v).
\end{equation}
We can use the Faà di Bruno’s formula (see e.g. \citep{hardy2006combinatorics}) which generalizes the chain rule to higer-order derivatives, for our last step. Faà di Bruno’s formula  states that for a pair of functions $g: \sR^n \to \sR$, $f: \sR \to \sR$, \( y = g(x_1, \ldots, x_n) \) the following holds, regardless of whether the variables $ x_1, \dots, x_n $ are all distinct, identical, or grouped into distinguishable categories of indistinguishable variables:

\begin{equation}
\label{eq:fa_di_bruno}
    \frac{\partial^n}{\partial x_1 \cdots \partial x_n} f(y) = 
\sum_{\pi \in \Pi} f^{(|\pi|)}(y) \cdot \prod_{B \in \pi} \frac{\partial^{|B|} y}{\prod_{j \in B} \partial x_j}
\end{equation}

where:
\begin{itemize}
    \item $\Pi$  denotes the collection of all partitions of the index set $ \{1, \ldots, n\} $,
    \item The notation  $B \in \pi$  indicates that  $B$  is one of the subsets (or "blocks") in the partition  $\pi$ ,
    \item For any set  $A$ , the notation  $|A|$  represents its cardinality. In particular,  $|\pi|$ is the number of blocks in the partition, and  $|B|$  is the number of elements in the block  $B$.
\end{itemize}

Equations \ref{eq:deepset_in_deriv_comp} and \ref{eq:fa_di_bruno} Imply that we are able to compute $\tD^{(t)}[v, \vu, \dots]$ based only on $\tD(\vh^{(t-1), \text{Lin}})[v, \vu, \dots]$  and the derivatives of $\sigma$ at the point $\vh^{(t-1), \text{Lin}}_v$. Combining this update with Equation  \ref{eq:agg_to_linear_deriv_update} results in a way to compute $\tD^{(t)}$ from $\tilde{\tD}^{(t-1)}$.

Importantly, the update above computes each entry of $\tD^{(t)}[v, \vu, \dots]$ using only the corresponding entries of $\tilde{\tD}^{(t-1)}[v, \vu, \dots]$, i.e., those associated with the same node tuple $(v, \vu)$. Moreover, from Equations~\ref{eq:agg_to_linear_deriv_update} and~\ref{eq:fa_di_bruno}, it follows that if $\tilde{\tD}^{(t-1)}[v, \vu, \dots] = \mathbf{0}$, then $\tD^{(t)}[v, \vu, \dots] = \mathbf{0}$ as well. This implies that like $\tilde{\tD}^{(t-1)}$ the number of non-zero entries in each $\tD^{(t)}[v, \dots]$ is also bounded by $O\left(\min\{d\cdot s_{ t-1},\; n^k\}\right)$. Consequently, the update—performed only over the non-zero entries of $\tilde{\tD}^{(t-1)}$—has a runtime complexity of $O\left(n \cdot \min\{d\cdot s_{t-1},\; n^k\}\right)$.

% for a pair of functions $g: \sR^n \to \sR$, $f: \sR \to \sR$ it holds that
% \begin{equation}
%     \frac{\partial^{\valpha_1+ \dots + \valpha_k} f \circ g}{\partial x_1^{\valpha_1}, \dots, \partial x_k^{\valpha_k}} = \sum_{\pi \in \Pi} f^{(\pi)}(g(x)) \cdot \prod_{\vbeta \in \pi} \frac{\partial^{\vbeta_1+ \dots + \vbeta_k}}{\partial x_1^{\vbeta_1}, \dots, \partial x_k^{\vbeta_k}}
% \end{equation}
% where

% \begin{itemize}
%     \item $\pi$ runs through the set $\Pi$ of all partitions of $[\valpha_1+ \cdots+\valpha_k]$.
%     \item $\vbeta \in \pi$ means the variable \( B \) runs through the list of all the "blocks" of the partition \( \pi \), and \( |A| \) denotes the cardinality of the set \( A \) (so that \( |\pi| \) is the number of blocks in the partition \( \pi \), and \( |B| \) is the size of the block \( B \)).

% \end{itemize}

% We can use  Faà di Bruno’s formula \citep{} which generalizes the chain rule to higer-order derivatives to obtain

% \begin 

\textbf{Computing $\tD^{\text{out}}$ from $\tD^{(T)}$.}

Recall that
\begin{equation}
    \vh^{\text{out}} = \agg_{\text{fin}}\left(\left\{\vh_v^{(T)} \mid v \in V(\gG)\right\}\right),
\end{equation}
where \(\agg_{\text{fin}}\) denotes the final aggregation over node embeddings. This operation can be treated analogously to the node update step: For most common MPNNs, it decomposes into a linear aggregation followed by an MLP. Consequently, the derivative tensor \(\tD^{\text{out}}\) can be computed from \(\tD^{(T)}\) using the same primitives described above.

\begin{algorithm}[H]
    \caption{Efficient Computation of Derivative Tensors}\label{alg:derivative_computation}
    \begin{algorithmic}[1]
        \Require{ Graph $\gG = (\mA, \mX)$, base GIN $\mpnn$ with $T$ layers}
        \State $\vh^{(0)} \gets \mX$ \Comment{node feature init.}
        \State $\tD^{(0)} \gets \tD(\mX)$ \Comment{deriv. init through Eq \ref{eq:derivative_init_higher_order}.}
        \For{$t = 1$ to $T$}
            \State $\tilde{\vh}^{(t-1)}_v  \gets \vh^{(t-1)}_v\oplus \left(\sum_{v' \in \gN(v)} b_{v',v} \vh^{(t-1)}_{v'} \right)$ \Comment{linear node agg.}
            \State $\tilde{\tD}^{(t-1)}[v, \dots]  \gets \tD^{(t-1)}[v, \dots] \oplus \left( \sum_{v' \in \gN(v)} b_{v',v} \tD^{(t-1)}[v', \dots] \right)$ \Comment{deriv. agg.}
        
            \State $\vh^{(t)}_v = \mlp^{(t-1)}(\tilde{\vh}^{(t-1)})_v$ \Comment{DeepSet update}
            \State $\tD^{(t)}[v, \dots] = \text{get-der}(\mlp^{(t-1)}, \tilde{\tD}^{(t-1)}[v, \dots], \tilde{\vh}^{(t-1)}_v  )$ \Comment{deriv. DeepSet update.}
        \EndFor
        
        $\tD^\text{out} = \text{get-out-der}(\tD^{(T)})$  \Comment{extract output deriv.}
        \State \Return $\tD^{(T)},\tD^{\text{out}}$
    \end{algorithmic}
\end{algorithm}

\section{Extended Theoretical Analysis }
\label{apx:theory}

\subsection{Definitions}
\label{apx:definitions}
Before delving into the proofs, we begin by formally defining several key concepts used throughout the analysis:

\begin{definition}[Analytic Function]
\label{def:analytic_function}
A function $f: \mathbb{R}^n \to \mathbb{R}$ is said to be \emph{analytic} at $\vx_0 \in \sR^{n}$ if for some $R \in \sR^n$ it holds that for all $|X|$ < R:

\begin{equation}
   f(\vx) = 
\sum_{|\valpha| = 0}^{\infty} 
\frac{1}{\valpha!} 
\partial^\valpha f(\vx_0) 
(\vx - \vx_0)^\valpha
\end{equation}

where $\valpha \in \sN^n$ and we use the following notation:
\begin{itemize}
    \item $|\valpha| = \alpha_1 + \alpha_2 + \cdots + \alpha_n$, 
    \item $\valpha! = \alpha_1! \cdot \alpha_2! \cdots \alpha_n!$,
    \item $(\vx )^\valpha = (x_1)^{\alpha_1} \cdot (x_2 )^{\alpha_2} \cdots (x_n)^{\alpha_n}$
    \item $\partial^\valpha f(\mathbf{a}) = 
    \frac{\partial^{|\valpha|} f}{\partial x_1^{\alpha_1} \partial x_2^{\alpha_2} \cdots \partial x_n^{\alpha_n}} \Big|_{\mathbf{x} = \mathbf{a}}.$
\end{itemize}

The largest such $R$ is called the radius of convergence. A function $f: \sR^n \to \sR^m$ is analytic if all functions $f_1, \dots, f_m$ are analytic.

\end{definition}

\begin{definition}[$k$-OSAN]
\label{def:osan}
A $k$-OSAN model $\osan$ consists of a base MPNN $\mpnn$ that produces updated node features (as opposed to directly outputting a graph-level prediction), followed by a downstream MPNN $\gnn$ that aggregates these features to produce a final graph-level output. Given an input graph $\gG = (\mA, \mX)$, the output $\osan(\gG)$ is computed in four stages:

\textbf{Step 1}: Construct a bag of subgraphs $\mathcal{B}_\gG = \{\gS_{\vu} \mid \vu \in V^k(\gG) \}$ each of the form $\gS_{\vu} = (\mA, \mX \oplus \ve^{\vu})$. Here $\ve^{\vu} \in \sR^{n \times k}$ is a "node marking"\footnote{Although alternative methods for initializing node features in subgraphs have been proposed, they offer the same expressive power. We therefore focus on the simple approach used here.} feature matrix assigning a unique identifier to each node $u_1, \dots u_k$. That is:

\begin{equation}
\label{eq:osan_marking}
    \ve^{\vu}_{v,j} = \begin{cases}
         1 & v = u_j \\
        0 & \text{else.}
    \end{cases}
\end{equation}

\textbf{Step 2}:

Compute the $(k+1)$-node indexed tensor:
\begin{equation}
    \tH[v, \vu] = \mpnn(\gS_{\vu})_v
\end{equation}

\textbf{Step 3}:
use a set aggregation function to produce new node features:

\begin{equation}
\label{eq:osan_aggregation}
    \vh^{\text{sub}}_v = \agg(\{\tH[v, \vu] \mid \vu \in V^k(\gG)  \}).
\end{equation}

\textbf{Step 4}:
Compute the final output through:
\begin{equation}
    \osan(\gG) = \gnn(\mA, \vh^{\text{sub}}).
\end{equation}

For $k=1$, $k$-OSAN are also reffered to as DS-GNNs. 

\end{definition}

% \begin{definition}[$k$-OSAN]
% A $k$-OSAN model $\osan$ consists of a base MPNN $\mpnn$ that produces updated node features (as opposed to directly outputting a graph-level prediction), followed by a downstream MPNN $\gnn$ that aggregates these features to produce a final graph-level output. Given an input graph $\gG = (\mA, \mX)$, the output $\osan(\gG)$ is computed in four stages:

% \textbf{Step 1}: Construct a bag of subgraphs $\mathcal{B}_\gG = \{\gS_{\vy} \mid \vu \in V^k(\gG) \}$ each of the form $\gS_{\vu} = (\mA, \mX \oplus \ve_{\vu})$. Here $\ve_{\vu}$ assigns a unique identifier to each node $u_1, \dots u_k$:

% \begin{equation}
% \label{eq:osan_marking}
%     \ve_{\vu}[v] = \begin{cases}
%         \va_i & v = u_i \\
%         \textbf{0} & \text{else,}
%     \end{cases}
% \end{equation}
%     where $\va_1, \dots , \va_k$ are unique learned vecotrs.

% \textbf{Step 2}:

% Compute the $(k+1)$-node indexed tensor:
% \begin{equation}
%     \tH[v, \vu] = \mpnn(\gS_{\vu})_v
% \end{equation}

% \textbf{Step 3}:
% use a set aggregation function to produce new node features:

% \begin{equation}
% \label{eq:osan_aggregation}
%     \vh^{\text{sub}}_v = \agg(\{\tH[v, \vu] \mid \vu \in V^k(\gG)  \}).
% \end{equation}

% \textbf{Step 4}:
% Compute the final output through:
% \begin{equation}
%     \osan(\gG) = \gnn(\mA, \vh^{\text{sub}}).
% \end{equation}

% \end{definition}

\begin{definition}[Random Walk Structural Encoding]
\label{def:rwse}
For a graph$\gG = (\mA, \mX)$ , the Random Walk Structural Encoding (RWSE) with $L$ number of steps is defined as
\begin{equation}
    \vh^{\text{rwse}} = \bigoplus_{l=1}^L \mathrm{diag}(\tilde{\mA}^l),
\end{equation}
where $\tilde{\mA} = \mA \cdot \mathrm{Diag}(\deg(u_1)^{-1}, \dots, \deg(u_n)^{-1})$ is the row-normalized adjacency matrix,  
$\mathrm{diag}(\cdot)$ denotes the vector of diagonal entries of a matrix, and $\mathrm{Diag}(\vv)$ denotes the diagonal matrix with vector $\vv$ on its diagonal.
\end{definition}

\subsection{Proofs}
\label{apx:proofs}

\textbf {Theorem \ref{thm:ourmethod_vs_osan}.}
We begin by formally stating and proving Theorem \ref{thm:ourmethod_vs_osan}, splitting it into a Theorem and a corollary. 

\begin{theorem}[$k$-\ourmethod\ is as expressive as $k$-OSAN]
\label{thm:ourmethod_vs_osan_formal}

Let $\{ \gG^i =(\mA^i, \mX^i) \mid i \in [l] \}$ be a finite set of graphs, and let $\osan$ be a $k$-OSAN model. for any $\epsilon > 0$, there exists a $k$-\ourmethod model $\ourgnn$ such that for each $i \in [l]$ 
\begin{equation}
   \left| \osan(\gG_i) - \ourgnn(\gG_i)  \right|< \epsilon.
\end{equation}
    
\end{theorem}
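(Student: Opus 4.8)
The plan is to build a $k$-\ourmethod model $\ourgnn$ that shares the downstream network $\gnn$ with $\osan$ and whose base MPNN reproduces the node-marking mechanism of $k$-OSAN (Definition~\ref{def:osan}) in the infinitesimal limit. Concretely, I would let the base MPNN of $\ourgnn$ act on the input features augmented with $k$ marking channels initialized to zero, so that differentiating along the channel of node $u_j$ plays the role of perturbing its marking. For a fixed tuple $\vu = (u_1,\dots,u_k)$, define the map $f_{v,\vu}(\vx) := \mpnn(\mA, \mX \oplus x_1 \ve_{u_1} \oplus \cdots \oplus x_k \ve_{u_k})_v$. By Equation~\ref{eq:osan_marking}, the marked output obeys $\tH[v,\vu] = f_{v,\vu}(1,\dots,1)$, while the mixed partial derivatives of $f_{v,\vu}$ at the origin are exactly the entries $\tD_k^{(T)}[v,\vu,\cdot,\valpha]$ of the node-wise derivative tensor evaluated at the unmarked input.

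The analytic core of the argument is then as follows. Since the activation $\sigma$ is analytic with infinite radius of convergence (Definition~\ref{def:analytic_function}), and the base MPNN is a finite composition of such activations with affine maps and linear neighbourhood aggregations, each $f_{v,\vu}$ is entire. Hence its multivariate Taylor series about the origin converges everywhere and in particular equals $f_{v,\vu}(1,\dots,1)$ at the marking point. Truncating at total order $m$ yields the polynomial $P^{m}_{v,\vu} = \sum_{|\valpha| \le m} \frac{1}{\valpha!}\,\tD_k^{(T)}[v,\vu,\cdot,\valpha]$, and I bound its remainder \emph{uniformly}: because $\{\gG^i\}_{i\in[l]}$ is finite and each graph carries finitely many tuples $\vu$, I may take the maximum over these finitely many entire functions and choose $m = m(\epsilon')$ so that $\lVert \tH[v,\vu] - P^{m}_{v,\vu}\rVert < \epsilon'$ simultaneously for every graph, node $v$, and tuple $\vu$.

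Next I construct the encoders. The truncated polynomial $P^{m}_{v,\vu}$ is a \emph{linear} functional of the entries $\tD_k^{(T)}[v,\vu,\cdot,\valpha]$ with the fixed coefficients $1/\valpha!$, so it is realized exactly by the per-tuple channel map of a $(k{+}1)$-IGN, or equivalently by the element map of the sparse DeepSet encoder $\pool^{\text{ds-node}}$ of Equation~\ref{eq:sparse_encoder}, which can read $\valpha$ off the canonical pattern identifier $t(v,\vu,\valpha)$. Applying the OSAN aggregator $\agg$ over $\vu$ (a sum or a DeepSet, both within the equivariant-pooling capacity of the encoder) then reproduces $\vh^{\text{sub}}_v = \agg(\{\tH[v,\vu]\})$ up to the truncation error, so $\pool^{\text{node}}(\tD_k^{(T)})_v$ approximates $\vh^{\text{sub}}_v$ to within a controllable tolerance; $\pool^{\text{out}}$ and the $\vh^{(T)}_v$ concatenation are made inert (their channels zeroed by the first downstream layer). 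Feeding these features into the shared downstream MPNN $\gnn$ and invoking its uniform continuity on the compact neighbourhood of the finitely many true feature vectors transfers the $\epsilon'$-closeness of the node features to $\epsilon$-closeness of the scalar outputs, so choosing $\epsilon'$ (hence $m$) small enough gives $|\osan(\gG_i) - \ourgnn(\gG_i)| < \epsilon$ for all $i$.

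The step I expect to be the main obstacle is twofold, though neither part is deep. First, making the uniform Taylor-remainder bound precise across the finite family of entire functions while correctly matching the marking-channel derivatives of $\ourgnn$ to the added indicator features of $\osan$. Second, verifying that the chosen encoder class exactly realizes the linear Taylor reconstruction followed by the $k$-OSAN aggregation, so that no expressivity is lost in Step~3. The error propagation through $\gnn$ is routine given finiteness and boundedness.
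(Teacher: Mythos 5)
Your proposal is correct and follows essentially the same route as the paper's proof: Taylor-expanding the marked base MPNN around the unmarked input (using analyticity with infinite convergence radius), truncating uniformly over the finite family of graphs, identifying the Taylor coefficients with entries of the $k$-indexed derivative tensor, realizing the $1/\valpha!$-weighted linear reconstruction plus the OSAN aggregation inside the (DeepSet/IGN) encoder, and reusing the downstream GNN with a continuity argument. The only cosmetic difference is that you parametrize the marking perturbation by $k$ scalars rather than the paper's $k\times k$ matrix $\mY$, which changes nothing of substance.
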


\begin{corollary}
\label{cor:ourmethod_vs_k_wl}
    There exist non-isomorphic graphs that are indistinguishable by the folklore $k$-WL test but are distinguishable by $k$-\ourmethod. Additionally,  $k$-\ourmethod\ is able to compute the homomorphism count of $k$-apex forest graphs.
\end{corollary}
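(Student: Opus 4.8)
The plan is to prove the substantive content in Theorem~\ref{thm:ourmethod_vs_osan_formal} (the approximation result) and then derive both parts of Corollary~\ref{cor:ourmethod_vs_k_wl} from it together with known expressivity facts about $k$-OSAN. I would construct the target $k$-\ourmethod\ model $\ourgnn$ directly from the given $k$-OSAN model $\osan=(\mpnn,\agg,\gnn)$: reuse $\mpnn$ as the base network and $\gnn$ as the downstream network, augment the node features with $k$ zero-initialized auxiliary channels, and design $\pool^{\text{node}},\pool^{\text{out}}$ to reconstruct the OSAN subgraph features from the derivative tensor. The key observation, following the motivation in Appendix~\ref{apx:motivation}, is that the marked output $\tH[v,\vu]=\mpnn(\mA,\mX\oplus\ve^{\vu})_v$ is an analytic function of the $k$ marking strengths. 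Concretely, setting the $j$-th auxiliary channel of node $u_j$ to $x_j$ and writing $g_{v,\vu}(x_1,\dots,x_k)$ for the resulting base output at node $v$, we have $g_{v,\vu}(1,\dots,1)=\tH[v,\vu]$ and the Taylor expansion $g_{v,\vu}(1,\dots,1)=\sum_{\valpha}\tfrac{1}{\valpha!}\,\partial^{\valpha}g_{v,\vu}(0)$, whose coefficients $\partial^{\valpha}g_{v,\vu}(0)$ are exactly the relevant entries of $\tD_k^{(T)}[v,\vu,\cdot,\valpha]$.

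First I would fix a truncation order $m$ and bound the remainder: since $\mpnn$ is analytic with infinite radius of convergence and the graph family is finite, all derivatives are uniformly bounded over the finitely many triples $(\gG_i,v,\vu)$, so the truncated sum $\sum_{|\valpha|<m}\tfrac{1}{\valpha!}\partial^{\valpha}g_{v,\vu}(0)$ approximates $\tH[v,\vu]$ uniformly to within any prescribed $\epsilon'$ once $m$ is large enough. Next I would verify the encoder realizes both the reconstruction and the aggregation: the weighted combination over $\valpha$ with weights $\tfrac{1}{\valpha!}$ is a fixed linear map along the derivative-order axis, and the pooling over the $k$ tuple indices $\vu$ (together with any pointwise MLP used by $\agg$) is precisely what the sparse $(k{+}1)$-IGN/DeepSet encoder of Equation~\ref{eq:sparse_encoder} can implement. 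Absorbing the $\vh^{(T)}_v$ and $\pool^{\text{out}}$ terms into the final MLP, this yields $\vh^{\text{der}}_v\approx\vh^{\text{sub}}_v$ uniformly. Finally, since $\gnn$ is continuous, hence Lipschitz on the compact range of features produced by the finite family, propagating the $\epsilon'$ feature error gives $|\ourgnn(\gG_i)-\osan(\gG_i)|<\epsilon$ after choosing $\epsilon'$ (equivalently $m$) small enough.

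With Theorem~\ref{thm:ourmethod_vs_osan_formal} in hand, Corollary~\ref{cor:ourmethod_vs_k_wl} is a short deduction. For the WL claim, it suffices to exhibit non-isomorphic graphs $\gG,\gG'$ that folklore $k$-WL fails to separate but some $k$-OSAN model $\osan$ distinguishes; such pairs follow from the expressivity analysis of ordered subgraph networks~\citep{qian2022ordered}. Applying the theorem to the finite family $\{\gG,\gG'\}$ with $\epsilon<\tfrac12|\osan(\gG)-\osan(\gG')|$ produces a $k$-\ourmethod\ model whose outputs on $\gG$ and $\gG'$ differ, so it distinguishes them. For the homomorphism-count claim, I would invoke the characterization of~\citet{zhang2024beyond} that $k$-OSAN determines homomorphism counts of $k$-apex forests; the theorem transfers this to $k$-\ourmethod\ on any finite family, and since such counts are integers one takes $\epsilon<\tfrac12$ and rounds to recover them exactly.

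The main obstacle is entirely contained in Theorem~\ref{thm:ourmethod_vs_osan_formal}: precisely matching the OSAN node-marking perturbation to differentiation with respect to the auxiliary input channels so that the $k$-indexed derivative tensor literally stores the Taylor coefficients of $g_{v,\vu}$, and then checking that the sparse encoder class is simultaneously rich enough to perform the Taylor reconstruction \emph{and} the OSAN aggregation while the remainder is controlled uniformly over the finite graph family. The analyticity and infinite-convergence-radius assumption is exactly what licenses evaluating the truncated expansion at marking strength $1$; once these points are settled, the corollary follows immediately.
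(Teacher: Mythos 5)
Your proposal is correct and follows essentially the same route as the paper: both derive the corollary from the OSAN-approximation theorem together with the known facts that $k$-OSAN separates some folklore-$k$-WL-indistinguishable pairs \citep{qian2022ordered} and computes $k$-apex-forest homomorphism counts \citep{zhang2024beyond}, and your sketch of the underlying theorem (Taylor expansion of the marked output at marking strength $1$, derivative tensor as Taylor coefficients, sparse IGN/DeepSet encoder performing reconstruction and aggregation) matches the paper's argument. Your explicit handling of the tolerance --- taking $\epsilon$ below half the output gap for the separation claim and below $\tfrac12$ with rounding for the integer counts --- is a small rigor upgrade over the paper, which leaves this implicit.
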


\begin{proof}
    We begin the proof by making a few simplifying assumptions on $\osan$, which we can do without loss of generality. We begin by assuming that all activation functions used in $\mpnn$, the base MPNN of $\osan$, are analytic with infinite radius of convergence (e.g., $\exp(x)$, $\sin(x)$). This assumption can be made without loss of generality: Since MLPs with non-polynomial analytic activations are universal approximators, each MLP in $\mpnn$ can be replaced with one using an analytic activation function that approximates the original to arbitrary precision. Furthermore, since the composition of analytic functions with infinite convergence radius remains analytic with infinite convergence radius, it follows that $\mpnn$—as a composition of affine transformations and activation functions—is itself analytic.

    Secondly, we assume that the final node representations produced by $\mpnn$ are one-dimensional. This assumption can be made without loss of generality: we can append a final MLP to $\mpnn$ that compresses each node's feature vector to a scalar, and prepend an MLP to the downstream network $\gnn$ that reconstructs the original feature dimension. This effectively amounts to inserting an autoencoder, where the encoder is a final pointwise update in $\mpnn$ and the decoder is an initial pointwise update in $\gnn$. Since this architecture can approximate the original architecture to arbitrary precision, we may assume without loss of generality that $\mpnn$ produces 1-dimensional node embeddings.

    Additionally, we can assume the input node feature matrices $\mX^i, i \in [l]$ are also all 1-dimensional. This follows from the same argument as above. 
    
    Finally, we consider $k$-\ourmethod models that only use the node derivative tensor $\tD^{(T)}$, which we use to extract node features through an IGN encoder $\pool$, and disregard $\tD^{\text{out}}$.

    We prove the theorem in three steps
    \begin{enumerate}
        \item We show that an intermediate representation of $\pool(\tD^{(T)})$ can encode the tensor $\tH_{v, \vu}$ that is produced at stage (2) of the forward pass of $\osan$ (see Definition \ref{def:osan}).
        \footnote{In cases where $\vu$ has repeated entries $\vu_{j_1} = \vu_{j_2}$, we only consider values of $\mY$ for which $\mY_{j_1, :} = \mY_{j_2, :}$.}

        \item we show that $\pool(\tD^{(T)})$ can approximate the node feature matrix $\vh^{\text{sub}}$ produced at stage (3) of the forward pass of $\osan$.

        \item We show $\ourgnn$ can approximate $\osan$ to finite precision.
        
    \end{enumerate}
    \textbf{Step 1}
    For each k-tuple of nodes $\vu \in V^k(\gG^i)$ and matrix $\mY \in \sR^{k \times k}$,  we define the node feature matrix $\text{broad}_\vu(\mY) \in \sR^{n\times k}$ (here we abuse notation and not include the graph index $i$) by 
    \begin{equation}
            \text{broad}_\vu(\mY)_{v,i} = \begin{cases}
            y_{j,i} & v = \vu_j\\
            0 & \text{else.}
        \end{cases}
    \end{equation}
    Additionally, for each  node $v \in \gG^i$, we define 
    $f^i_{v, \vu}:\sR^{k\times k}: \to \sR$
    \begin{equation}
        f^i_{v, \vu}(\mY) = \mpnn(\mA, \mX \oplus \text{broad}_\vu(\mY)).
    \end{equation}
    
Finally, define $\mY^\vu \in \sR^{k \times k}$ by 
\begin{equation}
    \mY^\vu_{i,j} = \ve^\vu_{u_i, j}.
\end{equation}
where $ \ve^\vu$ is the node marking node feature matrix introduced in Definition \ref{def:osan} (That is, $\text{broad}_\vu(\mY^\vu)= \ve^\vu$ ).

First, it is easy to see that
\begin{equation}
    f^i_{v, \vu}(\mY^\vu) = \mpnn(\gS^i_\vu)_v=\tH^i[v, \vu], 
\end{equation}
where $\gS^i_\vu$ and $\tH^i[v, \vu]$ are introduced in Definition \ref{def:osan}.

Second, as $\mpnn$ is analytic with infinite convergence radius, the functions $f^i_{v, \vu}$ are all analytic with infinite convergence radi, and so
\begin{equation}
    f^i_{v, \vu}(\mY^\vu) = 
\sum_{|\valpha| = 0}^{\infty} 
\frac{1}{\valpha!} 
\partial^\valpha f^i(\mathbf{0}) 
(\mY^\vu)^\valpha.
\end{equation}

Here, $\valpha \in \{0 , \dots, m-1 \}^{k \times k} $ and $(\mY^\vu)^\valpha = \prod (\mY^\vu_{j_1, j_2})^{\valpha_{j_1, j_2}}$. Since we are concerned with a finite number of graphs, for any $\epsilon > 0$ we can choose an integer $I$ such that for all graphs $\gG^i$ and all $v \in V(\gG^i)$, $\vu \in V^k(\gG^i)$ it holds that:
\begin{equation}
    |f^i_{v, \vu}(\mY^\vu) - \sum_{|\valpha| = 0}^{I} 
\frac{1}{\valpha!} 
\partial^\valpha f^i_{v, \vu}(\mathbf{0}) 
(\mY^\vu)^\valpha| < \epsilon.
\end{equation}

Defining 
\begin{equation}
    \tilde{\tH}[^iv, \vu] = \sum_{|\valpha| = 0}^{I} 
\frac{1}{\valpha!} 
\partial^\valpha f^i_{v, \vu}(\mathbf{0}) 
(\mY^\vu)^\valpha,
\end{equation}
we get that 
\begin{equation}
\label{eq:derivative_aprox}
    \tilde{\tH}^i \approx \tH^i
\end{equation}

Additionally, from the definition of $f^i_{v, \vu}$ it holds that the derivatives of $f^i_{v, \vu}$ at zero correspond to the entries of the $k$-indexed derivative tensor of $\gG^i$, denoted by $\tD^{(T),i}$\footnote{here we abuse notation and omit the subscrit $k$ in $\tD_k$.}, that is

\begin{equation}
    \partial^\valpha f^i_{v, \vu}(\mathbf{0})  = \tD^{(T),i}[v, \vu, \valpha],
\end{equation}
where we only take derivatives with respect to the last $k$ feature dimensions, which correspond to the "marking vectors"

Moreover, let $t$ be a function such that $t(v, \vu, \valpha)$ encodes the derivative pattern associated with the index tuple $(v, \vu, \valpha)$. That is, $t$ satisfies:
\begin{equation}
t(v_1, \vu_1, \valpha_1) = t(v_2, \vu_2, \valpha_2)
\quad \Leftrightarrow \quad
\valpha_1 = \valpha_2 ;\text{ and }; \exists \sigma \in S_n \text{ such that } v_1 = \sigma(v_2),; \vu_1 = \sigma(\vu_2),
\end{equation}
where $S_n$ is the symmetric group on $n$ elements. The value of $\frac{1}{\valpha!}(\mY^{\vu})^{\valpha}$ is determined entirely by $t(v, \vu, \valpha)$, and thus can be recovered from it.

This implies that the DeepSet encoder $\pool^\text{ds-node}$ defined in Equation \ref{eq:sparse_encoder} in Appendix \ref{apx:method}, can have an intermediate layer $L$ such that

\begin{equation}
\label{eq:taylor_aprox}
    L(\tD^{(T),i}) = \tilde{\tH}^i.
\end{equation}

Here $L$ simply  multiples each entry $\tD^{(T),i}[v, \vu, \valpha]$ by $\frac{1}{\valpha!}(\mY^{\vu})^{\valpha}$, followed by summing over the $\valpha$ indices. Thus, by Equations~\ref{eq:derivative_aprox} and~\ref{eq:taylor_aprox}, choosing $\mpnn$ as the base MPNN of a $k$-\ourmethod\ model allows us to approximate each $\tH^i$ to arbitrary precision using its derivatives. Note that since the DeepSet encoder is a restricted instance of a $(k+1)$-IGN encoder, it can achieve the same effect.

\textbf{Step 2}
Equation~\ref{eq:osan_aggregation} shows that the node features $\vh^{\text{sub},i}_v$ are constructed by applying a set-wise aggregation function over the set $\{ \tH^i[v,\vu] \mid \vu \in V^k(\gG^i) \}$. Any continuous set-wise aggregation function can be approximated to arbitrary precision by a DeepSet architecture \citep{zaheer2017deep} (see \cite{segol2019universal} for proof). Moreover, any DeepSet model applied in parallel over the first index of a $k$-indexed node tensor can be exactly implemented by a $k$-IGN, since each layer in such a model consists of a linear equivariant transformation followed by a pointwise nonlinearity. Thus, we can construct our encoder $\pool$ to first approximate the mapping $\tD^i \to \tilde{\tH}^i$ as an intermediate representation, and then approximate the subsequent mapping $\tilde{\tH}^i \to \vh^{\text{sub},i}$. Since both approximations can be made to arbitrary precision, this completes the proof of the claim in Step 2.

\textbf{Step 3}
The final forward step—applying a downstream GNN to the updated node features to produce a graph-level representation—is identical in both OSAN and \ourmethod. Therefore, by choosing the downstream GNN in the $k$-\ourmethod\ model to match that of $\osan$, the proof is complete.

\end{proof}

Using  Theorem \ref{thm:ourmethod_vs_osan_formal} we now prove corollary \ref{cor:ourmethod_vs_k_wl}

\begin{proof}[Proof of corollary \ref{cor:ourmethod_vs_k_wl}]
  It was shown by \citet{qian2022ordered} that $k$-OSAN models can distinguish between graphs that are indistinguishable by the $k$-WL test. As Theorem~\ref{thm:ourmethod_vs_osan_formal} establishes that our method can approximate any $k$-OSAN model to arbitrary precision, it follows that $k$-\ourmethod~ can do the same. Similarly, \citet{zhang2024beyond} showed that $k$-OSAN models can compute homomorphism counts of $k$-apex forests—graphs in which the removal of at most $k$ nodes yields a forest. Therefore, by Theorem~\ref{thm:ourmethod_vs_osan_formal}, Corollary~\ref{cor:ourmethod_vs_k_wl}  follows.
\end{proof}

\textbf {Theorem \ref{thm:ourmethod_vs_rwse}.}
We now formally state and prove Theorem \ref{thm:ourmethod_vs_rwse}
\begin{theorem}[\ourmethod\ is strictly more expressive than RWSE+MPNN]
\label{thm:ourmethod_vs_rwse_formal}
For any MPNN $\gnn$ augmented with random walk structural encodings (see Definition~\ref{def:rwse}), there exists a $1$-\ourmethod\ model $\ourgnn$ that uses ReLU activations and only first-order derivatives such that, for every graph $\gG$, it holds that
\begin{equation}
    \gnn(\gG) = \ourgnn(\gG).
\end{equation}

Moreover, there exist a pair of graphs $\gG^1$ and $\gG^2$ such that for every RWSE-augmented MPNN $\gnn$,
\begin{equation}
\label{eq:recreate_rwse}
    \gnn(\gG^1) = \gnn(\gG^2),
\end{equation}
yet there exists a $1$-\ourmethod\ model $\ourgnn$, using ReLU activations and only first-order derivatives, such that
\begin{equation}
    \ourgnn(\gG^1) \neq \ourgnn(\gG^2).
\end{equation}
\end{theorem}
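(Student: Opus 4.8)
I would prove the two halves of Theorem~\ref{thm:ourmethod_vs_rwse_formal} separately: a constructive equivalence showing that every RWSE-augmented MPNN is realized \emph{exactly} by some first-order, ReLU-based $1$-\ourmethod\ model, and a strict separation exhibiting a graph pair on which all RWSE-augmented MPNNs agree while some $1$-\ourmethod\ model disagrees. The first half is a direct construction; the second requires producing a suitable pair of graphs and a distinguishing invariant.

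\textbf{Constructive half.}
I would design the base MPNN $\mpnn$ so that its first-order self-derivatives reproduce the RWSE vector. Recall from the motivation (the model $\mpnn(\mA,\mX)=\mA^3\mX$) that for a linear message-passing model the first-order derivative $\partial \vh_{v,i}/\partial \mX_{u,j}$ equals the corresponding entry of the propagation operator. I would take $\mpnn$ to apply the row-normalized operator $\tilde{\mA}$ once per layer via degree-normalized aggregation, keeping residual copies of every intermediate representation, so that after $L$ layers the output channels carry $\tilde{\mA}^1\mX,\dots,\tilde{\mA}^L\mX$. The only subtlety is the ReLU: I would add a constant bias large enough (feasible since the graph and features are bounded) to keep every pre-activation positive, so that on the relevant domain each ReLU acts as the identity with derivative exactly $1$; the shifts are removed by the next layer's bias and do not affect the computed derivatives. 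Consequently $\tD^{(T)}[v,v,i,j,1]=(\tilde{\mA}^l)_{vv}=\vh^{\text{rwse}}_{v,l}$ for the appropriate channel $l$, and since a $2$-IGN can extract the diagonal of a $2$-tensor, $\pool^{\text{node}}$ can output exactly $\vh^{\text{rwse}}_v$. Reserving a passthrough channel for $\mX_v$ in $\vh^{(T)}_v$, the derivative-informed features become $\vh^{\text{der}}_v=\mX_v\oplus\vh^{\text{rwse}}_v$, and choosing $\gnn$ to be the MPNN of the RWSE-augmented model yields the exact equality $\ourgnn(\gG)=\gnn(\gG)$ for all $\gG$.

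\textbf{Separation half.}
I would exhibit two non-isomorphic graphs $\gG^1,\gG^2$ that are regular of the same degree, walk-regular (the closed-walk counts $\mathrm{diag}(\mA^l)_v$ are vertex-independent), and cospectral, so that their RWSE vectors coincide and are constant across vertices. Any RWSE-augmented MPNN is $1$-WL bounded with these uniform initial colors; since both graphs are regular, refinement stays uniform and identical, so the multiset of node features agrees and $\gnn(\gG^1)=\gnn(\gG^2)$ for every such model. To distinguish them with \ourmethod, I would use a base MPNN with two sum-aggregation layers (again kept in the ReLU-linear regime) plus residual copies, so that the first-order derivative tensor carries both $\mA$ and $\mA^2$ as edge features. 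A $2$-IGN $\pool^{\text{node}}$ can then form the pointwise quantity $\mA_{vu}(\mA^2_{vu})^2$ and sum over $u$, yielding per node $\sum_{u\sim v}(\text{number of common neighbors of }v,u)^2$; pooling over $v$ gives a graph-level count whose non-spectral part is a diamond count (a $K_4$ minus an edge). Since cospectral graphs share the same triangle count but can differ in diamonds, this invariant distinguishes $\gG^1,\gG^2$, giving $\ourgnn(\gG^1)\ne\ourgnn(\gG^2)$.

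\textbf{Main obstacle.}
The difficulty lies entirely in the separation half, specifically in the choice of $(\gG^1,\gG^2)$: I must produce, or cite from the cospectral-graph literature, an explicit pair of cospectral, walk-regular, equal-degree graphs whose diamond count differs, and then verify all three requirements simultaneously, namely identical RWSE (to defeat every RWSE$+$MPNN), $1$-WL equivalence, and a gap in an invariant that is genuinely realizable by a first-order, ReLU $1$-\ourmethod\ through its $2$-IGN encoder. The constructive half, by contrast, is routine once the ReLU-identity bias trick and the diagonal-extraction property of $2$-IGNs are in place.
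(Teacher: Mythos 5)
Your constructive half matches the paper's argument in all essentials: the paper likewise builds a base MPNN whose layers apply degree-normalized aggregation in a ReLU-linear regime so that the diagonal first-order derivatives equal $\tilde{\mA}^l_{v,v}$, keeps the original features in a passthrough block, and hands $\mX_v \oplus \vh^{\text{rwse}}_v$ to a downstream network chosen to be the given $\gnn$. The only difference is how ReLU-linearity is ensured: the paper pads the input with constant-$1$ channels so every intermediate value is an average of positive numbers and hence stays positive, whereas you add and later cancel a large bias. Both work; the paper's variant avoids having to argue that the auxiliary bias does not contaminate the derivatives.

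The separation half contains a genuine gap. You correctly identify the three requirements (identical constant RWSE on both graphs, $1$-WL equivalence, and a derivative-realizable invariant that differs), and your proposed invariant $\sum_{u\sim v}(\mA^2_{uv})^2$ --- in effect a diamond count layered on top of the spectrally determined triangle count --- is plausibly realizable by a first-order ReLU model with a $2$-IGN encoder. But the proof cannot close without an explicit witness pair, and you acknowledge you do not have one; the existence of cospectral, walk-regular, equal-degree, equal-size graphs with differing diamond counts is asserted, not established. The paper supplies the witnesses directly: the quartic vertex-transitive graphs $\mathrm{Qt}15$ and $\mathrm{Qt}19$ from Read and Wilson's atlas, whose indistinguishability by every RWSE-augmented MPNN was already proved by Southern et al.\ (2025) and is cited rather than re-derived. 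Its distinguishing invariant is also simpler than yours: it only tests for adjacent pairs $(u,v)$ with $\mA_{uv}=1$ and $\mA^2_{uv}=0$ (edges lying on no triangle), which exist in one graph and not the other, so the $2$-IGN merely filters off-diagonal derivative vectors near $(1,0)$ and sums. Your plan would likely go through if you completed the graph-pair search and verified the diamond-count gap, but as written the central existence claim is the missing step.
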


\begin{figure*}[t]
    \centering
    \includegraphics[width=\textwidth]{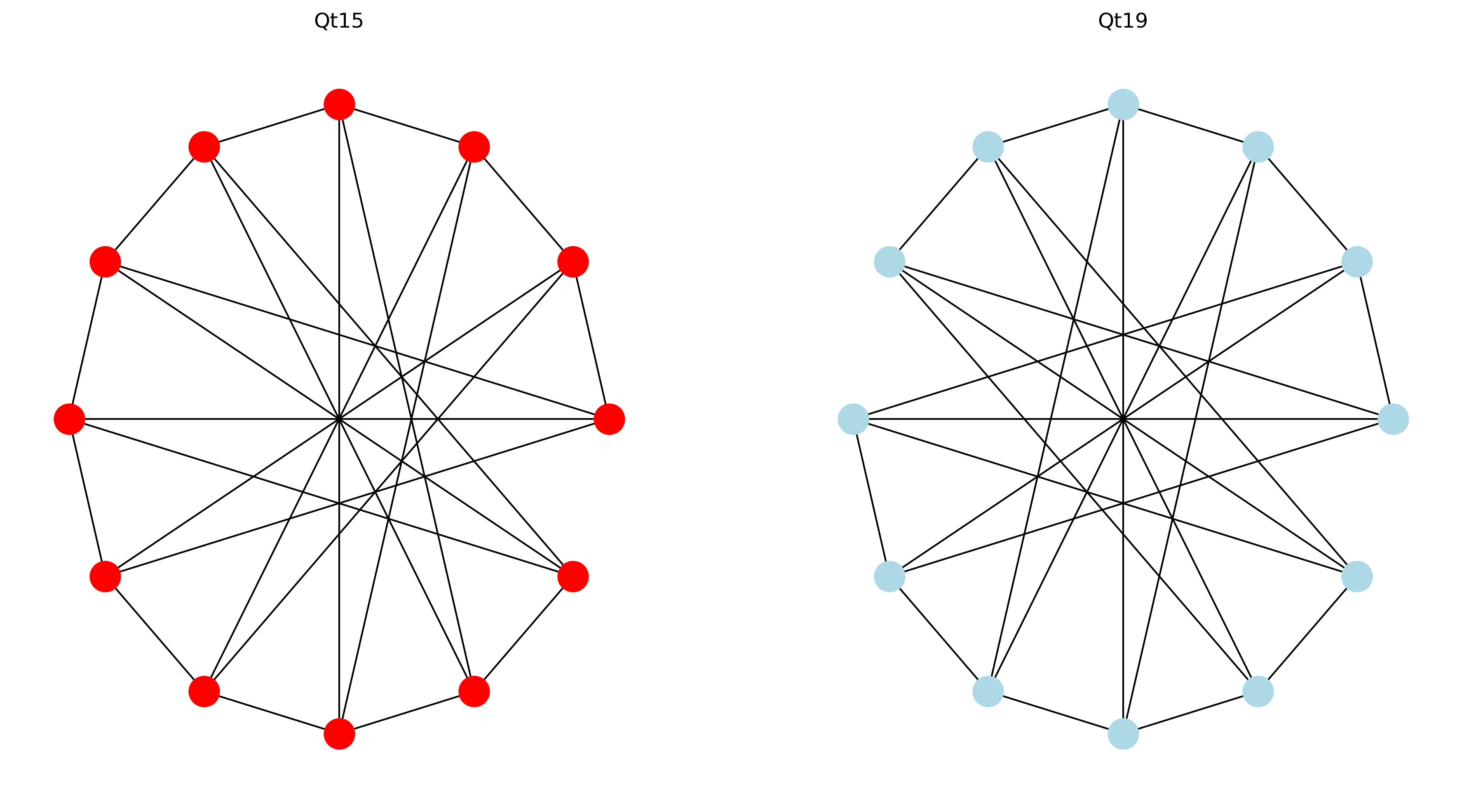}
    \vspace{-10pt}
    \caption{Two quartic vertex-transitive graphs that cannot be distinguished by MPNNs augmented with RWSE, but can be distinguished using \ourmethod.
}\label{fig:transitive_graphs}
\end{figure*}

\begin{proof}
   To prove the first part of the theorem for RSWE with  $L$ number of steps, we begin with a simple preprocessing step. For each input graph $\gG = (\mA, \mX)$ with node feature matrix $\mX \in \mathbb{R}^{n \times d}$, we define an extended feature matrix $\bar{\mX} = \mX \oplus \mathbf{1}_{L} \in \mathbb{R}^{n \times (d+L)}$ by padding $\mX$ with a constant vecotr of lenght $L$ and value $1$. We then pass $\bar{\mX}$ to the \ourmethod\ model instead of the original $\mX$.

It is now sufficient to construct a base MPNN $\mpnn$ with $T$ layers such that:
\begin{enumerate}

    \item The first $d$ coordinates of each final node embedding satisfy $\vh^{(T)}_{v,0:d-1} = \mX_v$, i.e., the original features are preserved.
    \item The first-order derivatives of the remaining $L$ coordinates (i.e., for indices $l = d, \dots, d+L-1$) satisfy:
    \begin{equation}
        \frac{\partial \vh^{(T)}_{v,l}}{\partial \bar{\mX}_{v,l}} = \tilde{\mA}^l_{v,v},
    \end{equation}
    
    where $\tilde{\mA}$ is the row-normalized adjacency matrix (see Definition~\ref{def:rwse}).
\end{enumerate}

If the above conditions hold, we can choose an MLP  such that 

\begin{equation}
    \vh^{\text{der}}_v = \mlp \left( \vh^{(T)}_v \oplus \pool^{(T)}(\tD^{(T)})_v \oplus \pool^{\text{out}}(\tD^{\text{out}}) \right) = \mX_v \oplus \vh^{\text{rwse}}_v
\end{equation}
and choose our downstream model to be exactly $\gnn$ thus satisfying Equation \ref{eq:recreate_rwse}.

We now construct an MPNN $\mpnn$ that satisfies both of the conditions above.

For each layer $t = 0, \dots, T-1$, we define the update rule of $\mpnn$ to act separately on two parts of the node feature vector: the first $d + t$ coordinates and the remaining $T - t$ coordinates. Specifically:

\begin{equation}
\label{eq:vs_osan_first_part}
    \vh^{(t+1)}_{v, 0:d +t -1} = \vh^{(t)}_{v, 0:d +t -1},
\end{equation}

\begin{equation}
\label{eq:vs_osan_second_part}
    \vh^{(t+1)}_{v, d+t: T -1} = \text{ReLU}\left(\frac{1}{\text{deg(v)}} \sum_{u \in \gN(v)}\vh^{(t)}_{u, d+t:T -1} \right).
\end{equation}.

We set the number of layers in $\mpnn$ to be $T = L$, where $L$ is the number of random walk steps used in the original RWSE encoding.

First, the proposed update rule is straightforward to implement within the MPNN framework defined by Equation~\ref{eq:mpnn}, as it follows a standard message-passing structure.

Second, Equation~\ref{eq:vs_osan_first_part} guarantees that the first $d$ coordinates of each node's feature vector remain unchanged throughout the layers, thereby satisfying condition (1).

Finally, Equations~\ref{eq:vs_osan_first_part} and~\ref{eq:vs_osan_second_part} together imply that for all $t = 0, \dots, T-1$, it holds that

\begin{equation}
    \vh^{(T)}_{v, t} = \tilde{A}^t \cdot \mathbf{1}_T.
\end{equation}
This implies that 

\begin{equation}
    \frac{\partial \vh^{(T)}_{v, t}}{\partial \bar{\mX}_{v, t}} = \tilde{\mA}^t_{v,v}
\end{equation}

and so condition (2) holds, completing the first part of the proof.

For the second part of the proof, following the notation of \citep{read1998atlas}, let $\gG_1$ and $\gG_2$ denote the quartic vertex-transitive graphs $\text{Qt}15$ and $\text{Qt}19$, respectively (see Figure \ref{fig:transitive_graphs}) with all initial node features equal to $\mathbf{1}_2 \in \sR^2$. Here, "vertex-transitive" means that for any pair of nodes, there exists a graph automorphism mapping one to the other, and "quartic" indicates that the graphs are 4-regular. This pair of graphs was shown in \citep{southern2025balancing} to be indistinguishable by MPNNs augmented with RWSE. Therefore, to conclude the proof, it suffices to construct an \ourmethod\ model that can distinguish between them. 

We define the base MPNN $\mpnn$ to consist of two layers, specified as follows. The first layer performs standard neighbor aggregation:

\begin{equation}
    \vh^{(1)}_v = \text{ReLU}\left( \sum_{u \in \gN(v)} \vh^{(0)}_u \right).
\end{equation}

In the second layer, the first coordinate of each node feature is preserved, while the second coordinate is updated via another aggregation step. That is

\begin{equation}
\label{eq:more_than_rwse_first_part}
    \vh^{(2)}_{v, 0} = \vh^{(1)}_{v, 0},
\end{equation}

\begin{equation}
\label{eq:beating_rwse_second_part}
    \vh^{(2)}_{v, 1} = \text{ReLU}\left( \sum_{u \in \gN(v)}\vh^{(1)}_{u, 1} \right).
\end{equation}.

from the same argument as the first part of the proof, we get that for $\mpnn$, it holds that 

\begin{equation}
        \frac{\partial \vh^{(2)}_{v,l}}{\partial \mX_{u,l}} = \mA^l_{v,u},
    \end{equation}

Note that for a pair of nodes $u, v$, we have $\mA_{u,v} = 1$ and $\mA^2_{u,v} = 0$ if and only if $u \in \gN(v)$ and, for all $u' \in \gN(v)$, it holds that $u \notin \gN(u')$—that is, there is no path of length exactly 2 from $u$ to $v$. A straightforward computation shows that no such node pairs exist in $\gG^1$, whereas several such pairs appear in $\gG^2$. Thus, the set of off-diagonal derivative vectors $\{ \tD^{(2),2}[u, v, 1, :] \mid u \neq v \in V(\gG^2) \}$ contains values that do not appear in the corresponding set $\{ \tD^{(2),1}[u, v, 1, :] \mid u \neq v \in V(\gG^1) \}$.

We define the node derivative encoder $\pool^{\text{node}}$ as a 2-IGN model that operates in two steps. First, it constructs a filtered tensor $\bar{\tD}$ by zeroing out all entries of $\tD^{(2)}$ that are not off-diagonal or are farther than $\epsilon = \frac{1}{4}$ from the vector $(1, 0)$. Then, it performs row-wise summation over $\bar{\tD}$ to produce node features:
\begin{equation}
    \pool^{\text{node}}(\tD^{(2)})_v = \sum_{u \in V(\gG)} \bar{\tD}[v, u].
\end{equation}

It follows that $\pool^{\text{node}}(\tD^{(2),1}) = \mathbf{0}$, while $\pool^{\text{node}}(\tD^{(2),2}) \neq \mathbf{0}$. Therefore, when these vectors are passed to the downstream MPNN, it will be able to distinguish between the two graphs, completing the proof.
\end{proof}

\textbf {Computational complexity.}

We finish this section by proving Propositions \ref{prop:derivative_complexity} and \ref{prop:encoder_complexity},which analyze the time and space complexity of \ourmethod.

\begin{proof}
     
    Recall that in Appendix \ref{apx:derivative_comutation}, we have defined the derivative sparsity of a node $v$ of an input graph $\gG$ at layer $t$ of the base MPNN, as 
\begin{equation}
    s_{v,t} = \left|\left\{(\vu, i, \valpha)\;\middle|\; \tD^{(t)}[v, \vu,i, \valpha] \neq 0 \right\}\right|,
\end{equation}
and the maximal  derivative sparsity at layer $t$ is
\begin{equation}
    s_t = \min_{v \in V(\gG)} s_{v,t}.
\end{equation}
the tensor $\tD^{(t)}$ has memory complexity of  $O(n \cdot s_t)$ as it can be stored using sparse matrix representations. Algorithm \ref{alg:derivative_computation} for efficient derivative computation iteratively computes $\tD^{(t)}$ using $\tD^{(t-1)}$ and finally computes  $\tD^{\text{out}}$ from $\tD^{(T)}$. We now prove by induction that for an input graph $\gG$ with maximal degree $d$\footnote{We slightly abuse notation by using $d$ to denote the maximal degree as it usually denotes the input feature dimension, we denote this quantity as $l$ for this discussion.},  there exists a constant $C$ which depends on the maximal number of derivatives, the input node feature dimension, and the dimension of $\vh^{(t)}$, all of which are hyperparameters, such that
\begin{equation}
\label{eq:induction_complexity_proof}
    s_{t} < C \cdot \min \{ d^{k \cdot t}, n^k \}.
\end{equation}
We additionally show that $\tD^{(t)}$ can be constructed from $\tD^{(t-1)}$ in time complexity of $O(d \cdot n \cdot \min \{ d^{k \cdot (t-1)}, n^k \})$

To begin, as we saw in Appendix \ref{apx:derivative_comutation}, 
\begin{equation}
    \tD^{(0)}[v, \vu, i, \vj, \valpha] =
    \begin{cases}
        1 & \text{if } \exists s \text{ s.t. }  v = \vu_s, i = \vj_s, \valpha_s = 1,  \sum_{s' \neq s}\valpha_{s'} = 0\\
        0 & \text{otherwise},
    \end{cases}
\end{equation}
and so $s_{v,0} <  C = m^k \cdot l^{k+1} $ where $m$ is the maximal derivative degree, and $l$ is the dimension of the input node features. Now assuming Equation \ref{eq:induction_complexity_proof} holds for $t-1$, in Appendix \ref{apx:derivative_comutation} we saw that $\tD^{(t)}$ can be computed from $\tD^{(t-1)}$ in time complexity of $O(d\cdot n \cdot  s_{ t-1}) = O(d\cdot n \cdot  \min \{ d^{k \cdot (t-1)}, n^k \})$, secondly, we saw that  $s_{t} = O\left( \min\{d\cdot s_{t-1},\; n^k\}\right) = O\left(\min \{ d^{k \cdot t}, n^k \}\right)$ completing the induction. As the memory complexity of storing $\tD^{(t)}$ is $O(n \cdot s_t) = O\left(n \cdot \min \{ d^{k \cdot t}, n^k \}\right)$ this proves Proposition \ref{prop:derivative_complexity}. Recall now that we have seen in the proof of theorem \ref{thm:ourmethod_vs_osan_formal} that k-\ourmethod~ is able to achieve the same expressivity as $k$-OSAN by disregarding $\tD^{\text{out}}$ and choosing the encoder $\pool^{\text{node}}$ to be

\begin{equation}
    \pool^{\text{ds-node}}(\tD^{(T)})_v = \text{DeepSet}(\{\tD^{(T)}[v, \vu,i,  \valpha] \mid  \vu \in V^k(\gG), i \in [d'], \in [d]^k, \valpha \in [m]^k \}).
\end{equation}

This encoder can leverage the sparsity of $\tD^{(T)}$, having runtime complexity of $O(n \cdot s_T) = O( n \cdot \min\{n^k,\, d^{k \cdot T}\})$, completing the proof of proposition \ref{prop:encoder_complexity}.
 
\end{proof}

\section{Experimental Details}
\label{app:experimental_details}
In this section, we provide details on the experimental validation described and discussed in Section \ref{sec:experiments}.

\subsection{architecture}
In all of our experiments, we have used a 1-\ourmethod~ architecture, We now describe its components in detail.

\textbf{Base MPNN.}
In all experiments, we use a GIN \citep{xu2018powerful} architecture as the base MPNN—described in Equation~\ref{eq:gin_combined} (Section~\ref{sec:method})—due to its simplicity and maximal expressivity. All MLPs used in the node updates are two layers deep and employ ReLU activations. We extract first-order derivatives from the base MPNN, which—by Theorem~\ref{thm:ourmethod_vs_rwse}—yields greater expressivity than MPNNs augmented with RWSE. After computing the final node representations $\vh^{(T)}$, we apply a residual connection by aggregating all intermediate layers:
\begin{equation}
\vh^{(T)} \gets \bigoplus_{t=1}^T \frac{\vh^{(t)}}{t!}.
\end{equation}
where normalizing in $t!$ helped stabalize training.

\textbf{Derivative Encoding.}

For simplicity, we ignore the output derivatives of the base MPNN and use only the final node-wise derivative tensor $\tD^{(T)}$. The encoder $\pool^{\text{node}}$ is implemented as a lightweight, efficient module that applies a pointwise MLP to the diagonal entries of $\tD^{(T)}$:
\begin{equation}
\pool^{\text{node}}(\tD^{(T)})_v = \mlp\left(\tD^{(T)}[v, v, \dots]\right).
\end{equation}

\textbf{Downstream GNN.} 

For simplicity and to isolate the effects of \ourmethod, we restrict our experiments to MPNNs as downstream architectures—We use GIN for all experiments but peptieds func in which we used a GCN \citep{kipf2016semi}. All MLPs used in the node updates are two layers deep with ReLU activations. The final MLP head also uses ReLU activations and consists of 1 to 3 layers, following the default settings from \citet{southern2025balancing}, without tuning. In experiments with parameter budgets, we adjust only the hidden dimension to fit within the limit, selecting the largest value that satisfies the constraint.

\textbf{Initialization.}
We initialize the base MPNN such that all categorical feature embeddings are set to the constant vector $\mathbf{1}$, all MLP weights are initialized to the identity, and the $\epsilon$ parameters in the GIN update (Equation~\ref{eq:gin_combined}) are set to $-1$. This initialization is motivated by the proof of Theorem~\ref{thm:ourmethod_vs_rwse}; following a similar line of reasoning, it implies that the diagonal of the derivative tensor $\tD^{(T)}[v,v,\dots]$ corresponds exactly to the centrality encoding proposed in \citet{southern2025balancing}.

\textbf{Optimization.} We use separate learning rates for the base and downstream MPNNs. For the downstream MPNN, we adopt the learning rates used in \citet{southern2025balancing}, while for the base MPNN, we fine-tune by selecting either the same rate or one-tenth of it.

\subsection{Experiments}

We provide below the details of the datasets and hyperparameter configurations used in our experiments. Our method is implemented using PyTorch~\citep{paszke2019pytorch} and PyTorch Geometric~\citep{fey2019fast}, and is based on code provided in \citet{southern2025balancing} and \citet{rampavsek2022recipe}. Test performance is evaluated at the epoch achieving the best validation score and is averaged over four runs with different random seeds. We optimize all models using AdamW~\citep{loshchilov2017decoupled}, with a linear learning rate warm-up followed by cosine decay. We track experiments and perform hyperparameter optimization using the Weights and Biases platform.  All experiments were conducted on a single NVIDIA A100-SXM4-40GB GPU.

\textbf{OGB datasets.} We evaluate on three molecular property prediction benchmarks from the OGB suite~\citep{hu2020open}:  MOLHIV, MOLBACE, and MOLTOX21. These datasets share a standardized node and edge featurization capturing chemophysical properties. We adopt the challenging scaffold split proposed in~\citep{hu2020graph}. To prevent memory issues, we use a batch size of 128 for MOLHIV and 32 for the remaining datasets. All Downstream models use a hidden dimension of 300, consistent with prior work~\citep{hu2020graph, bevilacqua2023efficient}. We sweep over several architectural choices, including the hidden dimension of the Base MPNN $k  = 8, 10, \dots, 30, 32$, the initial learining rate of the Base MPNN $k  = 0.001, 0.0001$, and the dropout rate $k=0.0, 0.1, 0.2, 0.3, 0.4, 0.5$ . The number of layers in the base MPNN was selected to match the positional encoding step used in the corresponding experiment from \citet{southern2025balancing}. Hyperparameter tuning was performed on the validation set using four random seeds. Results are reported at the test epoch corresponding to the best validation performance. All models were trained for 100 epochs. The final parameters used for each experiment are reported in table \ref{tab:best_hparams_ogb}

% \begin{table}[h]
% \centering
% \caption{Best-performing hyperparameters for each OGB dataset.}
% \label{tab:best_hparams_ogb}
% \begin{tabular}{lccc}
% \toprule
% \textbf{Hyperparameter} & \textbf{MOLHIV} & \textbf{MOLBACE} & \textbf{MOLTOX21} \\
% \midrule
% \multicolumn{4}{l}{\textbf{Downstream model}} \\
% \#Layers & 2 & 8 & 10 \\
% \#Readout Layers & 1 & 3 & 3 \\
% Hidden Dimension & 300 & 300 & 300 \\
% Dropout & 0.0 & 0.5 & 0.3 \\
% Learning Rate & 0.0001 & 0.0001 & 0.001 \\
% \midrule
% \multicolumn{4}{l}{\textbf{Base MPNN}} \\
% \#Layers & 16 & 20 & 20 \\
% Hidden Dimension & 16 & 16 & 16 \\
% Dropout & 0.2 & 0.5 & 0.2 \\
% Learning Rate & 0.0001 & 0.0001 & 0.0001 \\
% \midrule
% \#Parameters & 419,403 & 1,691,329 & 2,061,322 \\
% \bottomrule
% \end{tabular}
% \end{table}

\textbf{Zinc} The ZINC dataset~\citep{dwivedi2023benchmarking} includes 12k molecular graphs of commercially available chemical compounds, with the task of predicting molecular solubility. We follow the predefined dataset splits and report the Mean Absolute Error (MAE) as both the loss and evaluation metric. Our downstream MPNN for this task includes 6 message-passing layers and 3 readout layers, with a hidden size of 120 and no dropout. We use a batch size of 32 and train for 2000 epochs. We performed a small sweep over the depth of the base MNPNN $k={10,12,14,16,18,20}$ and the hidden dimension $k={30,35,\dots,80}$. The hidden dimension of the downstream MPNN was chose as $120$ to meet the $500k$ parameter constraint. The final hyperparameters are listed in Table~\ref{tab:best_hparams_zinc}.

% \begin{table}[h]
% \centering
% \caption{Best-performing hyperparameters for the ZINC dataset.}
% \label{tab:best_hparams_zinc}
% \begin{tabular}{lc}
% \toprule
% \textbf{Hyperparameter} & \textbf{ZINC} \\
% \midrule
% \multicolumn{2}{l}{\textbf{Downstream model}} \\
% \#Layers &  8 \\
% \#Readout Layers & 3 \\
% Hidden Dimension & 120 \\
% Dropout & 0.0 \\
% Learning Rate & 0.001\\
% \midrule
% \multicolumn{2}{l}{\textbf{Base MPNN}} \\
% \#Layers & 10 \\
% Hidden Dimension & 64 \\
% Dropout & 0.0 \\
% Learning Rate & 0.0001\\
% \midrule
% \#Parameters & 497,353 \\
% \bottomrule
% \end{tabular}
% \end{table}

\textbf{Peptides} Peptides-func and Peptides-struct, introduced by \citet{dwivedi2022long}, consist of graphs representing atomic peptides. Peptides-func is a multi-label classification benchmark with 10 nonexclusive peptide function labels, while Peptides-struct is a regression task involving 11 different structural attributes derived from 3D conformations.

For both datasets, we adopt the hyperparameter setup proposed by \citet{tonshoff2023did} for the downstream GNN, which has a parameter budget under 500k and where they use 250 epochs. We set the number of message-passing layers in our base MPNN with the positional encoding steps to be 20, aligned with the number of steps used for the random-walk structural encoding.  The only tuned component is the learning rate of the base MPNN. The final configurations are summarized in Table~\ref{tab:best_hparams_peptides}.

\textbf{Key empirical findings.} Across all benchmarks, \ourmethod\ is highly competitive and \emph{the only architecture that consistently ranks within the top two model tiers}. Additionally, the strong performance of \ourmethod\ on the large-scale Peptides datasets—where full-bag Subgraph GNNs are generally unable to run—shows its ability to scale effectively. Notably, the base MPNNs in \ourmethod\ are often significantly deeper and narrower than those in typical GNNs. For instance, on the OGB datasets, the base MPNNs use 17-20 layers with hidden dimensions as low as 16--32. Despite their compact size, these base MPNNs yield notable performance gains over standard GINE, suggesting potential robustness to oversquashing. Moreover, the fact that they are significantly deeper than typical MPNNs further suggests that \ourmethod\ may help mitigate oversmoothing, a hypothesis we leave for future work.

\vspace{-10pt} 

\begin{table}[h]
\centering
\caption{Best-performing hyperparameters for each OGB dataset.}
\label{tab:best_hparams_ogb}
\begin{tabular}{lccc}
\toprule
\textbf{Hyperparameter} & \textbf{MOLHIV} & \textbf{MOLBACE} & \textbf{MOLTOX21} \\
\midrule
\multicolumn{4}{l}{\textbf{Downstream model}} \\
\#Layers & 2 & 8 & 10 \\
\#Readout Layers & 1 & 3 & 3 \\
Hidden Dimension & 300 & 300 & 300 \\
Dropout & 0.0 & 0.5 & 0.3 \\
Learning Rate & 0.0001 & 0.0001 & 0.001 \\
\midrule
\multicolumn{4}{l}{\textbf{Base MPNN}} \\
\#Layers & 16 & 20 & 20 \\
Hidden Dimension & 16 & 16 & 16 \\
Dropout & 0.2 & 0.5 & 0.2 \\
Learning Rate & 0.0001 & 0.0001 & 0.0001 \\
\midrule
\#Parameters & 450,848 & 1,723,448 & 2,165,782 \\
\bottomrule
\end{tabular}
\end{table}

\begin{table}[h]
\centering
\caption{Best-performing hyperparameters for the ZINC dataset.}
\label{tab:best_hparams_zinc}
\begin{tabular}{lc}
\toprule
\textbf{Hyperparameter} & \textbf{ZINC} \\
\midrule
\multicolumn{2}{l}{\textbf{Downstream model}} \\
\#Layers &  6 \\
\#Readout Layers & 3 \\
Hidden Dimension & 120 \\
Dropout & 0.0 \\
Learning Rate & 0.001\\
\midrule
\multicolumn{2}{l}{\textbf{Base MPNN}} \\
\#Layers & 12 \\
Hidden Dimension & 75 \\
Dropout & 0.0 \\
Learning Rate & 0.0001\\
\midrule
\#Parameters & 498,144 \\
\bottomrule
\end{tabular}
\end{table}

\begin{table}[h]
\centering
\caption{Best-performing hyperparameters for the Peptides-func and Peptides-struct datasets.}
\label{tab:best_hparams_peptides}
\begin{tabular}{lcc}
\toprule
\textbf{Hyperparameter} & \textbf{Peptides-func} & \textbf{Peptides-struct} \\
\midrule
\multicolumn{3}{l}{\textbf{Downstream model}} \\
\#Layers & 6 & 10 \\
\#Readout Layers & 3 & 3 \\
Hidden Dimension & 234 & 143 \\
Dropout & 0.1 & 0.2 \\
Learning Rate & 0.001 & 0.001 \\
\midrule
\multicolumn{3}{l}{\textbf{Base MPNN}} \\
\#Layers & 20 & 20 \\
Hidden Dimension & 8 & 8 \\
Dropout & 0.1 & 0.2 \\
Learning Rate & 0.0001 & 0.001 \\
\midrule
\#Parameters & 498,806 & 493,849 \\
\bottomrule
\end{tabular}

\end{table}

\section{Additional Experiments}
\label{app:additional_experiments}

\subsection{Substructure Counting}

\begin{table}[t]
\setlength{\tabcolsep}{4pt}
\centering
\footnotesize
\vspace{-6pt}
\caption{Normalized MAE results on the counting subgraphs dataset. 
Cells below 0.01 are highlighted in yellow.}
\begin{tabular}{@{\hspace{4pt}}l | c c c c c c c c}
\toprule
\textbf{Method} & 3-Cycle & 4-Cycle & 5-Cycle & 6-Cycle & Tailed Tri. & Chordal Cycle & 4-Clique & 4-Path \\
\midrule

MPNN         & 0.3515 & 0.2742 & 0.2088 & 0.1555 & 0.3631 & 0.3114 & 0.1645 & 0.1592 \\

MPNN+RWSE    & 0.0645 & 0.0264 & 0.0746 & 0.0578 & 0.0505 & 0.1008 & 0.0905 & 0.0217 \\

GPS+RWSE   & 0.0185 & 0.0433 & 0.0472 & 0.0551  & 0.0446 &0.0974 &  0.0836 & 0.0284 \\

HyMN         & 0.0384 & 0.0933 & 0.1350 & 0.0936 & \cellcolor{yellow!25}0.0084 & 0.0746 & 0.0680 & 0.0120 \\

GNN-AK+      & \cellcolor{yellow!25}0.0004 & \cellcolor{yellow!25}0.0040 & 0.0133 & 0.0238 & \cellcolor{yellow!25}0.0043 & 0.0112 & \cellcolor{yellow!25}0.0049 & \cellcolor{yellow!25}0.0075 \\

\ourmethod + ReLU & \cellcolor{yellow!25}0.0012 & \cellcolor{yellow!25}0.0046 & 0.0210 & 0.0380 & \cellcolor{yellow!25} 0.0083 & 0.0510 & 0.0293 & \cellcolor{yellow!25}0.0081 \\

\ourmethod + SiLU   & \cellcolor{yellow!25}0.0008 & \cellcolor{yellow!25}0.0042 & \cellcolor{yellow!25}0.0068 & 0.0222 & \cellcolor{yellow!25}0.0066 & 0.0195 & \cellcolor{yellow!25}0.0055 & \cellcolor{yellow!25}0.0069 \\

\bottomrule
\end{tabular}
\label{tab:subgraph_counting}
\end{table}

We adopt the synthetic node-level subgraph counting experiment used in \citet{huang2022boosting, yan2024efficient}. The dataset consists of 5,000 graphs generated from a mixture of distributions (see \citet{zhao2022from} for more details), with a train/validation/test split of 0.3/0.2/0.5. The task is node-level regression: predicting the number of substructures such as 3-cycles, 4-cycles, 5-cycles, 6-cycles, tailed triangles, chordal cycles, 4-cliques and 4-paths, where continuous outputs approximate discrete counts. We report the normalized MAE for each baseline, and highlight cases where the error falls below 0.001, since in these instances rounding the predictions yields exact counts.  

For training, we use the AdamW optimizer with an initial learning rate of 0.001, a cosine scheduler with warmup, and train for 5,00 epochs. The batch size is set to 128. We compare \ourmethod against both positional/structural encoding methods (MPNN+RWSE \citep{dwivedi2021graph}, GPS+RWSE \citep{rampavsek2022recipe}, HyMN \citep{southern2025balancing}) and node-based subgraph GNNs (GNN-AK+ \citep{zhao2022from}, Nested GNN \citep{zhang2021nested}, ID-GNN \citep{you2021identity}, HyMN \citep{southern2025balancing}).  

To examine the role of activation functions, we evaluate \ourmethod with the non-analytic ReLU and the analytic SiLU. The results, summarized in Table~\ref{tab:subgraph_counting}, are consistent with our theory. In line with Theorem~\ref{thm:ourmethod_vs_osan}, \ourmethod achieves performance comparable to or surpassing other subgraph GNNs. Furthermore, consistent with Theorem~\ref{thm:ourmethod_vs_rwse}, we observe that analytic activations enhance expressivity, while even with non-analytic ReLU, \ourmethod still outperforms encoding-based methods.

\subsection{Ablation on Derivative Order}
To assess the impact of higher-order derivatives on \ourmethod's expressive power, we conduct an ablation study on the most challenging substructure counting task: 6-cycle prediction, which consistently yields the highest MAE. Specifically, we evaluate models restricted to derivatives of order at most $k$, for $k = 0,\ldots,4$. The results, presented in Table~\ref{tab:derivative_ablation}, demonstrate that increasing the maximal derivative order consistently improves MAE, with performance saturating at $k=4$. All experiments use the analytic SiLU activation function. We hypothesize that the observed saturation arises because higher-order derivatives of SiLU rapidly diminish toward zero, limiting the additional expressive gain.

\begin{table}[H]
\centering
\footnotesize
\caption{Effect of maximal derivative order on 6-Cycle MAE.}
\begin{tabular}{c c}
\toprule
\textbf{Maximal derivative order} & \textbf{6-Cycle MAE} \\
\midrule
0 & 0.1555 \\
1 & 0.0275 \\
2 & 0.0223 \\
3 & 0.0221 \\
4 & 0.0231 \\
\bottomrule
\end{tabular}
\label{tab:derivative_ablation}
\end{table}

\subsection{Stability Analysis }

We evaluate the stability of \ourmethod~both in terms of training dynamics and the behavior of the derivative tensor norm. Across all tasks, we consistently observe smooth and stable loss curves, even when incorporating higher-order derivatives. To further assess stability, we conduct two dedicated experiments.

\paragraph{Training Loss Dynamics.}
We examine convergence behavior on the 6-cycle subgraph counting task by varying the maximum derivative order $d \in {1,2,3,4}$. Figure ~\ref{fig:loss_curves}  shows the training loss curves, showing consistent and reliable convergence across all derivative orders.

\begin{figure*}[t]
    \centering
    \includegraphics[width=0.73\textwidth]{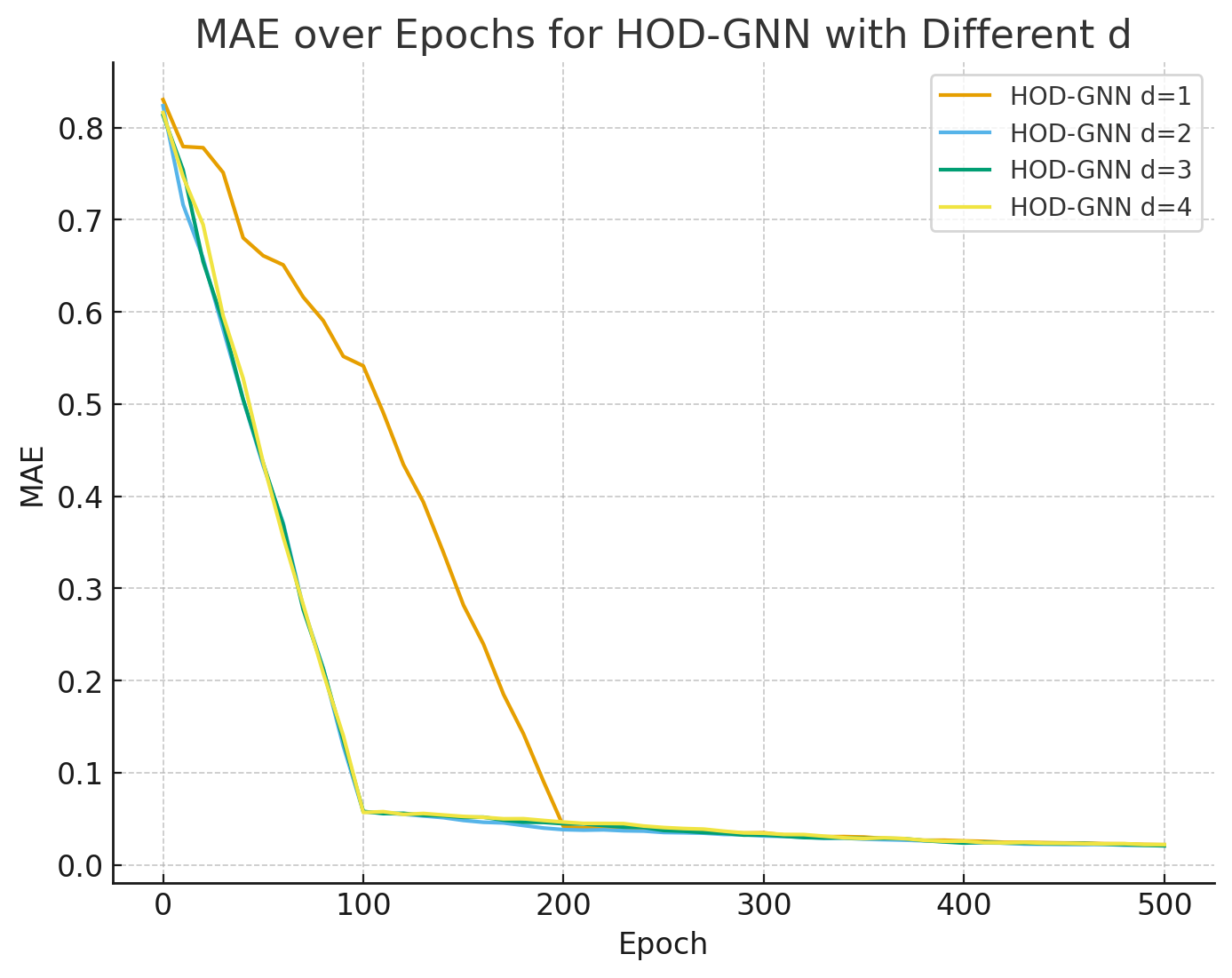}
    \caption{The training loss of \ourmethod with maximal derivative $d \in \{1,2,3,4\}$ on the 6-cycle counting task. }\label{fig:loss_curves}
\end{figure*}

\begin{table}[t]
\centering
\caption{Runtime and memory comparison on the MOLHIV dataset. HOD-GNN demonstrates both improved memory efficiency and competitive runtime.}

\begin{tabular}{l|c|c|c}
\toprule
GNN & GPU Memory (MiB) & Training Time / Epoch (s) & Test Time / Epoch (s) \\
\midrule
edge-deletion & 32,944 & 62.13 & 5.70 \\
node-deletion & 29,826 & 58.80 & 4.01 \\
ego-nets      & 25,104 & \textbf{53.16} & 3.07 \\
ego-nets+     & 25,211 & 54.19 & 3.20 \\
\ourmethod    & \textbf{12,964} & 53.34 & \textbf{2.28} \\
\bottomrule
\end{tabular}
\label{tab:scalability}
\end{table}

\paragraph{Norm of the Derivative Tensor.}
We additionally assess the stability of the derivative tensors used by \ourmethod.  We compute the norm of the derivative tensor during training on the MOLBACE dataset. Following standard practice~\citep{higham2002accuracy}, we report the relative norm: the ratio between the derivative tensor norm and the final node feature norm of the base MPNN. Figure~\ref{fig:derivative_norm} shows that the relative norm  consistently remains significantly lower than that of the node features throughout training, confirming that \ourmethod~operates in a well-conditioned regime.

\begin{figure*}[t]
    \centering
    \includegraphics[width=0.73\textwidth]{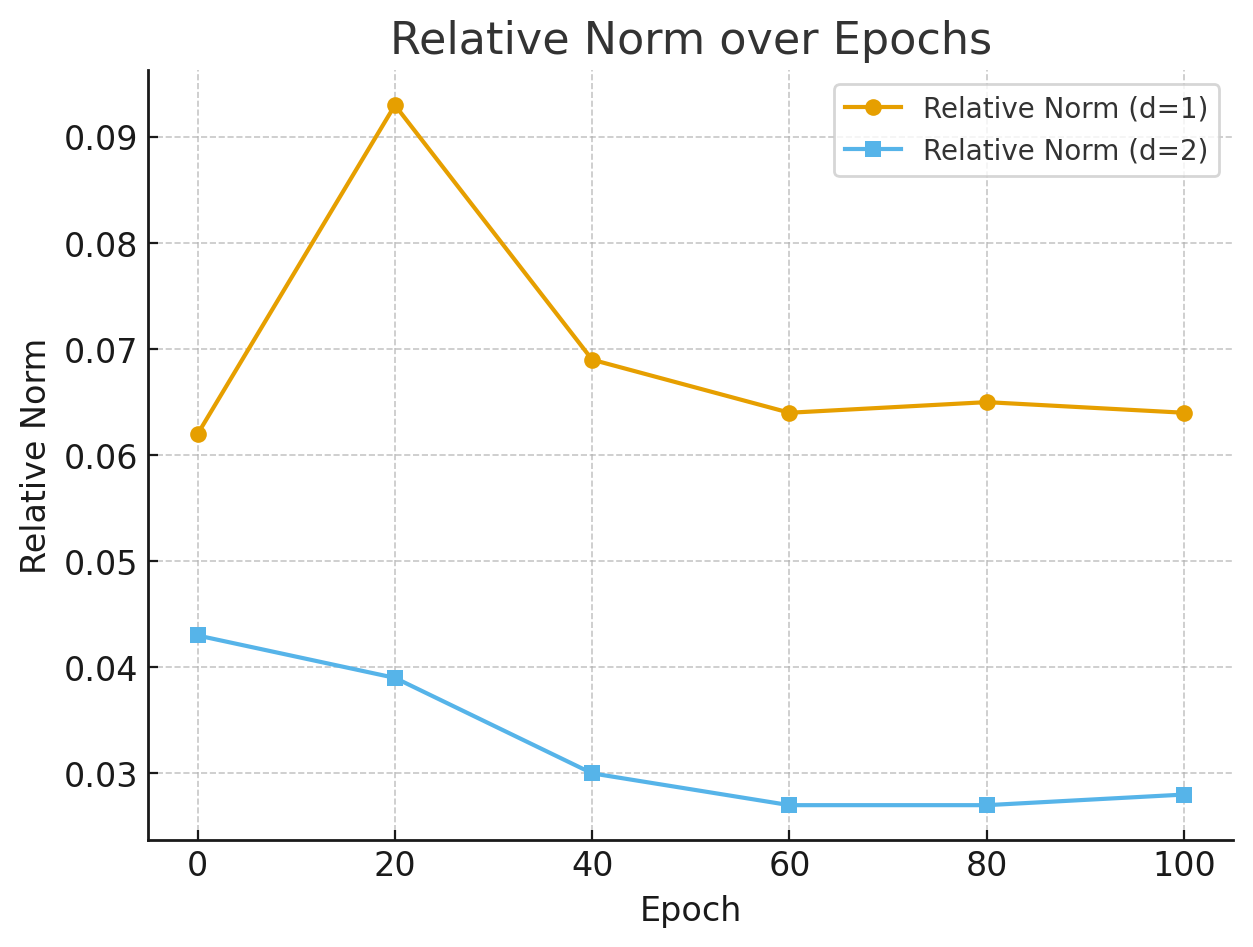}
    \caption{The relative norm of the derivative tensor with maximal derivative $d \in \{1,2\}$ with respect to the norm of the final node feature matrix of the base MPNN on the MOLBACE dataset. }\label{fig:derivative_norm}
\end{figure*}

\subsection{Comparison of Runtime and Memory usage against subgraph GNNs}

In section \ref{sec:experiments} we show that \ourmethod is able to scale to the Peptides datasets, which are unreachable for full-bag subgraph GNNs on standard hardware, as stated in \citet{southern2025balancing, bar-shalom2023subgraphormer}.To further demonstrate the scalability of \ourmethod, we benchmark its runtime and memory usage against subgraph-based GNNs on the MOLHIV dataset. Since the choice of subgraph selection policy is the primary factor determining the asymptotic complexity of subgraph GNNs ~\citep{bevilacqua2021equivariant}, we evaluate a range of policies using the default parameters from the original work. For subgraph GNNs, we adopt the hyperparameters reported in \citet{bevilacqua2021equivariant} for MOLHIV. For \ourmethod, we use the same hyperparameters as in Section~\ref{sec:experiments}, detailed in Appendix~\ref{app:experimental_details}.

\begin{table}[t]
\centering
\caption{MAE comparison of different GNN architectures on ZINC-12K.}
\begin{tabular}{l c}
\toprule
\textbf{GNN} & \textbf{MAE} \\
\midrule

GCN              & 0.321 $\pm$ 0.009 \\
HOD-GNN + GCN    & 0.080 $\pm$ 0.006 \\
\midrule

GIN              & 0.163 $\pm$ 0.004 \\
HOD-GNN + GIN    & 0.066 $\pm$ 0.003 \\
\midrule

GPS              & 0.070 $\pm$ 0.004 \\
HOD-GNN + GPS    & 0.064 $\pm$ 0.002 \\
\bottomrule
\end{tabular}
\label{tab:backbone}
\end{table}

\paragraph{Results.}
Table~\ref{tab:scalability} reports GPU memory usage and per-epoch training and test runtimes. \ourmethod~achieves improvements in memory efficiency, requiring less than half the GPU memory compared to subgraph GNNs. In terms of runtime, HOD-GNN is faster than edge-deletion and node-deletion policies, while achieving comparable training time to ego-nets and ego-nets+. Notably, ego-net policies are known to be less expressive ~\citep{bevilacqua2021equivariant}, highlighting that HOD-GNN achieves both efficiency and expressivity.

% ~\citep{barcelo2020logical}

\paragraph{Discussion.}
The improvements primarily stem from the analytic computation of higher-order derivatives in \ourmethod, which avoids the costly enumeration of subgraphs. While these advantages already translate to lower memory usage and faster runtimes in practice, we emphasize that HOD-GNN’s scalability potential is not yet fully realized. In particular, it relies on efficient sparse matrix multiplications, which are currently suboptimally implemented in popular GNN libraries such as PyTorch Geometric. We therefore anticipate that further optimization of sparse kernels would amplify the scalability benefits of HOD-GNN.

\subsection{Ablation on Backbone GNN}
In most experiments, we employ GIN~\citep{xu2018powerful} as the backbone of \ourmethod. A key advantage of our approach, however, is its compatibility with any message-passing backbone. To assess the effect of backbone choice, we evaluate \ourmethod~on the \textsc{Zinc} dataset using GCN, GIN, and GPS as base architectures.
The results, summarized in Table~\ref{tab:backbone}, demonstrate that \ourmethod~consistently improves upon its backbone across all settings. Among the tested architectures, GPS achieves the strongest performance, followed by GIN, with GCN ranking third—reflecting its lower expressivity relative to the other backbones.

\end{document}